\documentclass{article}

\usepackage{microtype}
\usepackage{graphicx}
\usepackage{subcaption}
\usepackage{booktabs} % for professional tables

\usepackage{hyperref}

\usepackage[accepted]{icml2026}

\usepackage{amsmath}
\usepackage{amssymb}
\usepackage{mathtools}
\usepackage{amsthm}

\usepackage[capitalize,noabbrev]{cleveref}

\theoremstyle{plain}
\newtheorem{theorem}{Theorem}[section]

\newtheorem{lemma}[theorem]{Lemma}
\newtheorem{corollary}[theorem]{Corollary}
\theoremstyle{definition}
\newtheorem{definition}[theorem]{Definition}

\theoremstyle{remark}

\usepackage[disable,textsize=tiny]{todonotes}
\icmltitlerunning{The Loss Is Not Enough}

\begin{document}

\twocolumn[
  \icmltitle{The Loss Is Not Enough: Sampling Conditions and Inductive Bias in Contrastive Representation Learning}

  % It is OKAY to include author information, even for blind submissions: the
  % style file will automatically remove it for you unless you've provided
  % the [accepted] option to the icml2026 package.

  % List of affiliations: The first argument should be a (short) identifier you
  % will use later to specify author affiliations Academic affiliations
  % should list Department, University, City, Region, Country Industry
  % affiliations should list Company, City, Region, Country

  % You can specify symbols, otherwise they are numbered in order. Ideally, you
  % should not use this facility. Affiliations will be numbered in order of
  % appearance and this is the preferred way.
  \icmlsetsymbol{equal}{*}

  \begin{icmlauthorlist}
    \icmlauthor{Justinas Zaliaduonis}{tum,stanforduniv}
    \icmlauthor{Patrick Putzky}{mxm}
    \icmlauthor{Till Richter}{tum,helmholtz}
    \icmlauthor{Sergios Gatidis}{stanford}
  \end{icmlauthorlist}

\icmlaffiliation{stanforduniv}{Stanford University, Stanford, USA}
\icmlaffiliation{stanford}{Department of Radiology, Stanford University School of Medicine, Stanford, USA}
\icmlaffiliation{tum}{Technical University of Munich, Munich, Germany}
\icmlaffiliation{helmholtz}{Helmholtz Munich, Munich, Germany}
\icmlaffiliation{mxm}{Merantix Momentum GmbH, Berlin, Germany}

\icmlcorrespondingauthor{Justinas Zaliaduonis}{justinas.zaliaduonis@gmail.com}
\icmlcorrespondingauthor{Patrick Putzky}{patrick.putzky@merantix-momentum.com}
\icmlcorrespondingauthor{Till Richter}{till.richter@helmholtz-munich.de}
\icmlcorrespondingauthor{Sergios Gatidis}{sgatidis@stanford.edu}

  % You may provide any keywords that you find helpful for describing your
  % paper; these are used to populate the "keywords" metadata in the PDF but
  % will not be shown in the document
  \icmlkeywords{Machine Learning, ICML}

  \vskip 0.3in
]

% this must go after the closing bracket ] following \twocolumn[ ...

% This command actually creates the footnote in the first column listing the
% affiliations and the copyright notice. The command takes one argument, which
% is text to display at the start of the footnote. The \icmlEqualContribution
% command is standard text for equal contribution. Remove it (just {}) if you
% do not need this facility.

% Use ONE of the following lines. DO NOT remove the command.
% If you have no special notice, KEEP empty braces:
\printAffiliationsAndNotice{\icmlEqualContribution}
% Or, if applicable, use the standard equal contribution text:
% \printAffiliationsAndNotice{\icmlEqualContribution}

\begin{abstract}
Contrastive learning has become a leading paradigm for self-supervised representation learning, yet the conditions under which it recovers meaningful latent geometry remain incompletely understood. We develop a measure-theoretic framework formalizing the diversity condition, a support requirement on positive-pair sampling that is necessary for isometric latent recovery. We show that the standard full-support von Mises-Fisher setting implies the satisfaction of the diversity condition and as a consequence global contrastive loss minimizers recover latent geometry up to orthogonal transformation, while restricted conditionals can make non-orthogonal maps attain strictly lower asymptotic contrastive loss. We introduce a support-corrected Information Noise Contrastive Estimation (InfoNCE) variant as a theoretical fix: this correction makes orthogonal latent space recovery achievable but does not uniquely select it. Experiments on synthetic benchmarks validate the identifiability predictions, and CIFAR-10 experiments are consistent with the qualitative prediction that architectural inductive bias becomes more important when sampling diversity is limited. Together, our results clarify how sampling mechanisms and encoder inductive bias interact in contrastive representation learning.
\end{abstract}

\section{Introduction}
\label{introduction}

The machine learning community has long envisioned methods that turn vast amounts of unlabeled data into dense, robust, and reusable representations useful for many different downstream tasks such as classification, regression, and search.
Contrastive learning (CL) has emerged as a successful technique for achieving this goal, which in recent years has led to advances in language~\cite{jaiswal2021surveycontrastiveselfsupervisedlearning}, vision~\cite{chen2020simpleframeworkcontrastivelearning}, video~\cite{zhao2025videoprismfoundationalvisualencoder}, and multimodal~\cite{radford2021learningtransferablevisualmodels} domains.

The vast range of applications in scientific fields like Biology~\cite{richter2025delineating, Bahrami2025.10.14.682419}, Physics~\cite{cy2023studying, Wilkinson_2025}, and Climate Science~\cite{ballard2022contrastive, info17020121} has made CL one of the most widely adopted unsupervised learning methods~\cite{uelwer2023surveyselfsupervisedrepresentationlearning}.
However, despite its empirical success, the precise mechanisms driving contrastive learning remain only partially understood.
This gap in theoretical understanding results in heuristic-driven development, inefficient use of computational resources, and design choices that may not fully exploit the method's potential. In this work, we seek to move beyond intuitive understanding of CL and provide a rigorous framework to reason about its regimes of success and failure.

One approach to explain the learning mechanisms posits that CL induces data representations invariant to nuisance factors~\cite{dangovski2022equivariantcontrastivelearning, liu2025clicv2imagecomplexityrepresentation, poudel2022contrastiveunsupervisedlearningworld}.
However, this framework does not address which factors in the data are nuisance, nor how the choice of data augmentations implicitly determine this partition.
Moreover, the choice of nuisance factors, often referred to as the style-content decomposition~\cite{vonkügelgen2022selfsupervisedlearningdataaugmentations}, can be detrimental to downstream tasks:
depending on the intended use of the learned representations, factors deemed ``nuisance'' by the contrastive objective may carry discriminative information necessary for a specific downstream task. We refer to this approach as the \textit{\textbf{Invariance Explanation}}.

\begin{figure*}[!h]
      \centering
      \includegraphics[width=\textwidth]{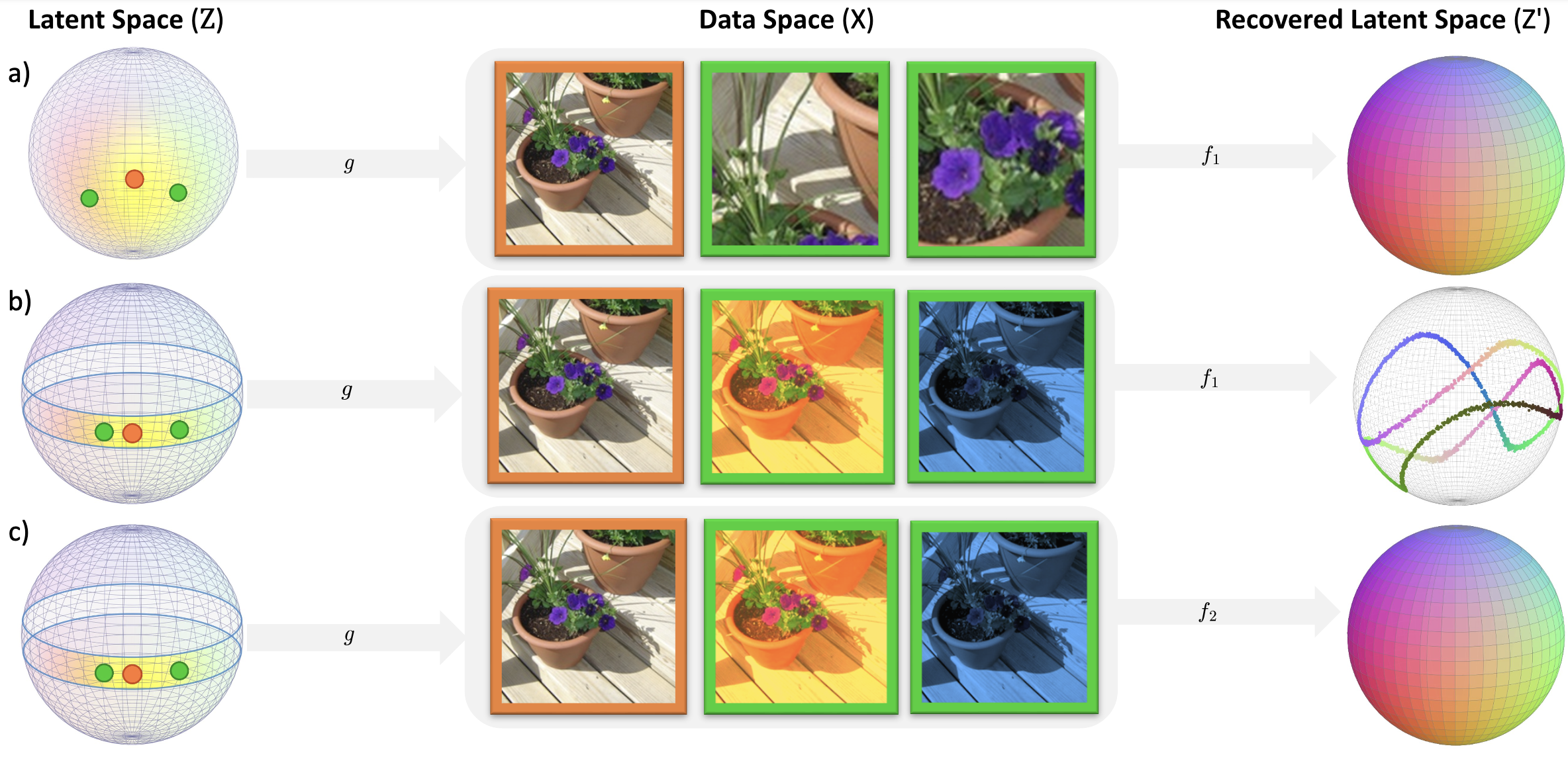}
      \caption{Overview of contrastive learning and the role of sampling diversity and inductive bias. The generative process $g$ maps latent variables to observations, and the encoder $f$ learns to recover the latent structure. Here $f_1$ denotes a low inductive bias encoder (e.g., MLP) and $f_2$ a high inductive bias encoder (e.g., a model of the inverse process). Orange dot indicates the anchor point; green dots are co-occurring (positive) samples. Border colors on images match their latent positions. (a)~Diversity holds: $f_1$ recovers geometry. (b)~Diversity violated (blue band): $f_1$ fails. (c)~Diversity violated: $f_2$ recovers the latent structure despite restricted sampling diversity.}
      \label{fig:contrastive_layout}
\end{figure*}

An alternative direction reasons that CL recovers the ``true'' generating factors of the data~\cite{ji2023powercontrastfeaturelearning, kirchhof2023probabilisticcontrastivelearningrecovers, sandilya2025contrastivediffusionalignmentlearning}.
This approach assumes that data lies on a high-dimensional manifold but possesses a low-dimensional latent structure, and that a generative process maps this low-dimensional representation to the observed high-dimensional data.
In this direction, CL recovers the latent structure.
Figure~\ref{fig:contrastive_layout} illustrates this setup.
We refer to this approach as the \textit{\textbf{Recovery Explanation}}.

In this work, we argue that the Recovery Explanation provides a more epistemologically complete account of the mechanisms underlying contrastive learning.
We introduce a constraint on the conditional law of the latent space, $P_{\tilde{Z}|z}$, which we term the \textit{\textbf{diversity condition}}, and show that it is necessary for recovering the latent space up to an isometric (distance-preserving) transformation.
Building on the notion of latent space recovery introduced by~\cite{zimmermann2022contrastivelearninginvertsdata}, we separate the classical full-support vMF setting from the practically relevant setting where sampling is restricted.
Our proof of the full-support case uses a probabilistic Mazur-Ulam argument~\cite{zaliaduonis2026probabilisticgeneralizationmazurulamtheorem} and does not require differentiability of the recovery map. Our main  result shows that violated diversity can make geometry-distorting solutions preferable, and that correcting the support mismatch restores, but does not uniquely select, geometry-preserving minimizers.

% Since real-world sampling mechanisms typically violate the diversity condition, we propose a generalized version of the InfoNCE objective and prove that this adjustment allows geometry-preserving latent spaces to be among the possible solutions to the optimization problem.
The real-world sampling mechanisms often violate the diversity condition and the latent structure is typically unknown a priori.
We propose a generalized InfoNCE objective to address this gap and prove that this adjustment allows geometry-preserving latent spaces to be among the optimal solutions of the objective.

We empirically study the qualitative implications of the theory on CIFAR-10~\cite{krizhevsky2009learning}.
Here, we examine how architectural inductive bias affects representation quality under different augmentation regimes.
In summary, our contributions are:

% In addition to theoretical contributions, we experimentally validate our claims on synthetic datasets and CIFAR-10~\cite{krizhevsky2009learning}, empirically demonstrating the necessity of inductive bias for the CL framework independent of the amount of training data available. In summary, our contributions are:

\begin{itemize}
    \item Formalize a measure-theoretic diversity condition on latent space sampling, distinguish it from the full-support vMF assumption used in prior identifiability results, and analyze what fails when this condition is violated.
    \item Prove that violated diversity can make non-orthogonal recovery maps attain strictly lower asymptotic contrastive loss than any orthogonal map, and show that a support-corrected InfoNCE objective makes isometric embeddings achievable.
    \item Empirically validate theoretical predictions on synthetic datasets and examine their qualitative implications on CIFAR-10, demonstrating how sampling strategies and encoder inductive bias jointly determine representation quality.
    \item Distill design guidance for choosing augmentation pipelines and encoder architectures from the theory-experiment alignment.
\end{itemize}

\section{Related Work}
\label{sec:related-work}

\paragraph{Identifiability in Contrastive Learning.}
The question of identifiability in unsupervised learning has a long history, beginning with classical results in Independent Component Analysis (ICA). Early work established that linear mixtures of non-Gaussian independent sources can be identified up to permutation and scaling, laying the theoretical foundation for blind source separation~\cite{comon1994independent}. Subsequent developments provided practical algorithms and characterized the fundamental limits of linear ICA~\cite{hyvarinen2000independent}. The broader goal of learning disentangled representations that capture meaningful factors of variation was later articulated~\cite{bengio2013representation}, though it was subsequently proved that unsupervised disentanglement is impossible without inductive biases on both the model and data~\cite{locatello2019challenging}.

In the context of contrastive learning, a formal definition of latent space identifiability was introduced, with proofs that InfoNCE recovers ground-truth factors up to orthogonal transformation~\cite{zimmermann2022contrastivelearninginvertsdata}. This analysis employs von Mises-Fisher conditionals and cross-entropy asymptotics on spherical manifolds. Connections between contrastive objectives and nonlinear ICA have also been established, demonstrating identifiability through temporal structure~\cite{hyvarinen2016unsupervised}.

\paragraph{Augmentations and Invariance.}
A complementary line of work examines how data augmentations shape learned representations by defining which factors should be preserved versus discarded. A content-style framework formalizes this intuition, proving that augmentation-based contrastive learning achieves block-identifiability of the invariant content partition under a latent variable model with nontrivial statistical and causal dependencies~\cite{vonkügelgen2022selfsupervisedlearningdataaugmentations}. The InfoMin principle proposes that optimal views for contrastive learning should share minimal mutual information while retaining task-relevant information, thereby discarding nuisance factors~\cite{tian2020makes}. A causal interpretation shows that data augmentations can be viewed as interventions on style variables, motivating an explicit invariance regularizer that enforces invariant prediction of proxy targets across augmentations~\cite{mitrovic2021representation}.
\citet{wen2021toward} study a complementary feature-learning regime: they analyze gradient-based learning in finite ReLU networks and show how suitable augmentations can decouple desired sparse features from nuisance dense features. Our analysis asks a different question, namely whether the asymptotic contrastive objective recovers latent geometry up to isometry, and how architectural inductive bias compensates when sampling diversity is insufficient.

\paragraph{Architectural Bias and Extensions.}
The role of encoder architecture in contrastive learning has received increasing attention. Recent work analyzes how architectural constraints influence the geometry of learned representations~\cite{haochen2023theoretical}, while theoretical frameworks have been extended to multimodal settings~\cite{tschannen2023clippo}. Other contributions address gaps between theoretical assumptions and practical implementations~\cite{rusak2025infonce}.

\paragraph{Empirical Methods.}
Our theoretical analysis builds upon empirically successful methods including SimCLR~\cite{chen2020simpleframeworkcontrastivelearning}, Contrastive Predictive Coding (CPC)~\cite{oord2019representationlearningcontrastivepredictive}, and VICReg~\cite{bardes2022vicreglselfsupervisedlearninglocal}. These frameworks demonstrate the practical efficacy of contrastive objectives across vision, language, and multimodal domains.

\section{Preliminaries}
\label{preliminaries}

\subsection{Contrastive Learning Framework}
\label{subsec:contrastive-learning-model}

Following~\cite{zimmermann2022contrastivelearninginvertsdata}, we analyze contrastive learning using tools from Nonlinear Independent Component Analysis (ICA)~\cite{hyvarinen2019nonlinearicausingauxiliary}. We consider an encoder $f: \mathcal{X} \to \mathcal{Z}$ mapping observations from the data space $\mathcal{X} \subset \mathbb{R}^{n}$ to a latent representation space $\mathcal{Z} \subset \mathbb{R}^{k}$, where $k < n$. We model the data as generated by an injective mapping $g:\mathcal{Z}\to\mathcal{X}$ from a lower-dimensional latent manifold whose coordinates represent statistically independent factors (Figure~\ref{fig:contrastive_layout}).

Contrastive learning aims to learn encoder parameters $\theta$ by discriminating between co-occurring signals $x, \tilde{x} \in \mathcal{X}$~\cite{chen2020simpleframeworkcontrastivelearning}. These signals may arise from natural mechanisms (e.g., different modalities of the same scene) or synthetic transformations (e.g., image augmentations). We define a view-generating function $\tau: \mathcal{X} \to \mathcal{X}$ that produces related views $\tilde{x} = \tau(x)$, and aim to learn an encoder such that the underlying latent factors are identified.

\subsection{Sampling Mechanism}
\label{subsec:sampling-condition}

Since the data is generated via the injective mapping $g$, observations satisfy $x = g(z)$ and $\tilde{x} = \tau(g(z)) = g(\tilde{z})$. As view generation is typically non-deterministic, we treat it stochastically through the conditional law $P_{\tilde{X}|x}$. This formulation connects to the latent dynamics $P_{\tilde{Z}|z}$ via the pushforward measure:
\begin{equation}
\label{eq:pushforward-measure}
P_{\tilde{X}|x} = g_* P_{\tilde{Z}|z}
\end{equation}
where $g_*$ is the pushforward operator. Our analysis focuses on how $P_{\tilde{Z}|z}$ affects the recovery map $h := f \circ g: \mathcal{Z} \to \mathcal{Z}$.
The sampling mechanism need not be an image augmentation; it may also arise from temporal proximity, spatial crops, multimodal co-occurrence, or any other rule for drawing related observations. Independent views mean conditionally independent draws from $P_{\tilde{Z}|z}$ for a fixed anchor $z$, not semantic independence of the resulting observations.
Notation: uppercase $P,Q$ denotes probability laws, while lowercase $p,q$ denotes probability measure densities with respect to the stated reference measure.

\subsection{Diversity Condition}
\label{subsec:diversity-condition}

We formalize the diversity condition and argue for its necessity in distance-preserving latent space reconstruction.
The condition is motivated by comparing the theoretical ideal of full-support positive-pair sampling with practical sampling mechanisms.
In the ideal vMF setting used by~\cite{zimmermann2022contrastivelearninginvertsdata}, positive pairs can in principle cover the entire latent space around an anchor, which provides enough information to recover pairwise geometry.
Practical mechanisms often restrict this support by keeping some latent coordinates fixed while changing others.
The diversity condition identifies the weakest support requirement needed for recovery: every latent region that has nonzero marginal probability must also be reachable by the conditional positive-pair distribution.

\begin{definition}[Diversity Condition]
\label{def:diversity}
For a latent measurable space $\mathcal{Z}$ with marginal probability measure $P_Z$ and conditional probability measure $P_{\tilde{Z}|z}$, the diversity condition holds if, for $P_Z$-almost every $z \in \mathcal{Z}$,
\begin{equation}
    P_Z \ll P_{\tilde{Z}|z}.
\end{equation}
Equivalently, for every measurable set $A \subseteq \mathcal{Z}$, $P_{\tilde{Z}|z}(A)=0$ implies $P_Z(A)=0$ for $P_Z$-almost every anchor $z$.
\end{definition}

Intuitively, the diversity condition requires that $P_{\tilde{Z}|z}$ has sufficiently large support to cover any region where $P_Z$ assigns nonzero probability. This implies that the view-generating process must perturb all latent features. If some generative features remain constant, the encoder cannot distinguish them along fixed dimensions, failing to invert $g$ for those components.

Finally, we relate the condition to practical sampling mechanisms. The diversity condition is stated at the level of the induced latent sampling mechanism $P_{\tilde{Z}|z}$, rather than in terms of application-specific categories of transformations. Descriptions such as appearance, structure, or semantic content are therefore only informal indicators of which latent directions a sampling mechanism may vary. In practice, an augmentation or view-generation rule is useful for recovery only to the extent that it gives the conditional distribution support along the latent factors present under $P_Z$. Mechanisms that vary only a restricted subset of these factors may violate the condition, even if they produce visually distinct observations.
  
\subsection{InfoNCE as Cross-Entropy Minimization}
\label{subsec:model-corectness}

Information Noise Contrastive Estimation (InfoNCE)~\cite{oord2019representationlearningcontrastivepredictive} is the standard contrastive objective used to train representations from one positive view and a set of negative samples.
For each anchor, it increases similarity to the positive sample while decreasing similarity to negatives, making it a probabilistic discrimination loss over co-occurring and non-co-occurring samples.
The InfoNCE objective quantifies encoder performance on this discrimination task.

\begin{definition}[InfoNCE Loss~\cite{oord2019representationlearningcontrastivepredictive}]
\label{def:infonce}
Given a recovery map $h: \mathcal{Z} \to \mathcal{Z}$, positive pairs $(z, \tilde{z}) \sim P_{\mathrm{pos}}$, and negative samples $\{z_i^-\}_{i=1}^M \overset{\mathrm{i.i.d.}}{\sim} P^{-}$:
\begin{align*}
\mathcal{L}_{\mathrm{CL}}(h; \tau, M) &= \mathop{\mathbb{E}}_{\substack{(z,\tilde{z}) \sim P_{\mathrm{pos}} \\ \{z_i^-\}_{i=1}^M \overset{\mathrm{i.i.d.}}{\sim} P^{-}}} \left[ -\log \frac{e^{h(z)^{\top} h(\tilde{z})/\tau}}{D(z, \tilde{z})} \right] \\[0.5em]
D(z, \tilde{z}) &= e^{h(z)^{\top} h(\tilde{z})/\tau} + \sum_{i=1}^M e^{h(z)^{\top} h(z_i^-)/\tau}
\end{align*}
where $\tau > 0$ is a temperature parameter and $M$ is the number of negative samples.
\end{definition}

Following~\cite{zimmermann2022contrastivelearninginvertsdata}, we take the latent space to be a hypersphere $\mathbb{S}^{k-1}$, motivated by the practical convention of $L^2$-normalizing contrastive representations~\cite{chen2020simpleframeworkcontrastivelearning, haas2024exploringsimplehighquality}. We assume the conditional follows a von Mises-Fisher (vMF) distribution with concentration $\kappa>0$, where sampling frequency is inversely proportional to latent distance.

\begin{theorem}[Asymptotics of $\mathcal{L}_{\text{CL}}$~\cite{zimmermann2022contrastivelearninginvertsdata}]
\label{thm:cl-cross-entropy}
	    Given a spherical latent space $\mathcal{Z} = \mathbb{S}^{k-1}$, a uniform marginal law $P_Z = U(\mathcal{Z})$, a von Mises-Fisher conditional measure $P_{\tilde{Z}|z}$ with density

    \begin{equation}
        \label{eq:vmf}
        p(\tilde{z}|z) = \frac{e^{\kappa \tilde{z}^\top z}}{\int_{\mathcal{Z}} e^{\kappa z'^\top z} \, d\sigma(z')}
	    \end{equation}
	    where $\sigma$ denotes spherical surface measure on $\mathcal{Z}$ and $\kappa>0$ is the concentration parameter,

    and a model conditional measure $Q_{h,z}$ with density

    \begin{equation}
        \label{eq:dist-model}
        q_{h}(\tilde{z}|z) = \frac{e^{h(\tilde{z})^\top h(z) / \tau}}{\int_{\mathcal{Z}} e^{h(z')^\top h(z) / \tau} \, d\sigma(z')}
    \end{equation}

    For fixed $\tau > 0$, as the number of negative samples $M \to \infty$, the (normalized) contrastive loss converges to

    \begin{multline}
        \lim_{M \to \infty} \mathcal{L}_{\text{CL}}(h; \tau, M) - \log M + \log |\mathcal{Z}| = \\
        \mathbb{E}_{z \sim P_Z} [H(P_{\tilde{Z}|z}, Q_{h,z})]
    \end{multline}

    where $H(P_{\tilde{Z}|z}, Q_{h,z})$ denotes cross-entropy, using densities with respect to $\sigma$ in the vMF setting.
\end{theorem}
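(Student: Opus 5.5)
The argument follows the standard asymptotic analysis of InfoNCE (Wang--Isola style, as used by Zimmermann et al.). We split the loss into a positive-pair term and a log-partition term and handle the limit of each separately. Write $s(a,b)=h(a)^\top h(b)/\tau$, and take the negatives to be drawn from $P^-=P_Z=U(\mathcal{Z})$. Then for each sample,
\begin{equation*}
-\log\frac{e^{s(z,\tilde z)}}{D(z,\tilde z)} = -s(z,\tilde z) + \log\Bigl(\tfrac{1}{M}e^{s(z,\tilde z)} + \tfrac{1}{M}\textstyle\sum_{i=1}^M e^{s(z,z_i^-)}\Bigr) + \log M .
\end{equation*}
Subtracting $\log M$ therefore leaves $\mathbb{E}[-s(z,\tilde z)]$ plus the expectation of a log-empirical-mean.

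The first step is the limit of the log-empirical-mean. Because $h$ maps into $\mathbb{S}^{k-1}$, we have $|s|\le 1/\tau$. Hence $\frac1M e^{s(z,\tilde z)}\to 0$ uniformly, and the term inside the logarithm lies in $[e^{-1/\tau},\,2e^{1/\tau}]$. For fixed $z$, the strong law of large numbers gives
\begin{equation*}
\tfrac1M\textstyle\sum_i e^{s(z,z_i^-)}\to \mathbb{E}_{z'\sim U(\mathcal{Z})}\bigl[e^{s(z,z')}\bigr]=\tfrac{1}{|\mathcal{Z}|}\int_{\mathcal{Z}} e^{s(z,z')}\,d\sigma(z')
\end{equation*}
almost surely. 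The log of the whole argument is bounded between two constants, so dominated convergence lets us pass the limit through the expectation over $z$, $\tilde z$ and the negatives. The result is
\begin{equation*}
\lim_{M\to\infty}\mathcal{L}_{\mathrm{CL}}-\log M=\mathbb{E}_{(z,\tilde z)}\bigl[-s(z,\tilde z)\bigr]+\mathbb{E}_z\Bigl[\log\int_{\mathcal{Z}} e^{s(z,z')}\,d\sigma(z')\Bigr]-\log|\mathcal{Z}|.
\end{equation*}
This boundedness-plus-dominated-convergence step is the one that needs the most care, since measurability of $h$ must be assumed and the convergence must hold jointly. I expect it to go through cleanly because of the sphere constraint. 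If measurability is not automatic from the setup, I would add it as an explicit standing assumption on $h$.

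The second step is the algebraic identification with cross-entropy. Taking $\log$ of \eqref{eq:dist-model} gives
\begin{equation*}
-\log q_h(\tilde z|z)=-s(z,\tilde z)+\log\int_{\mathcal{Z}} e^{s(z,z')}\,d\sigma(z').
\end{equation*}
Write the positive-pair expectation as $\mathbb{E}_{z\sim P_Z}\mathbb{E}_{\tilde z\sim P_{\tilde Z|z}}$, noting that the vMF density \eqref{eq:vmf} is with respect to $\sigma$. The inner expectation of $-\log q_h(\tilde z|z)$ is then exactly $H(P_{\tilde Z|z},Q_{h,z})$, computed with $\sigma$-densities. The log-partition term does not depend on $\tilde z$, so it integrates out trivially. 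Adding $\log|\mathcal{Z}|$ to both sides then yields the stated identity.

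Finally, I would note that uniformity of $P_Z$ enters only through the negative distribution and the normalisation $|\mathcal{Z}|$. The vMF form of the positive conditional is never used beyond its having a $\sigma$-density, so the argument extends verbatim to any conditional that is absolutely continuous with respect to $\sigma$.
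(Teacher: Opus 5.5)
Your proposal is correct and takes the same route the paper relies on, via Zimmermann et al.~\cite{zimmermann2022contrastivelearninginvertsdata} and the limit of \cite{wang2022understandingcontrastiverepresentationlearning}: split off $\log M$, handle the log-partition term by the law of large numbers plus bounded (dominated) convergence, and identify the result with the cross-entropy by writing the normalizer as $|\mathcal{Z}|\,\mathbb{E}_{z'\sim U(\mathcal{Z})}[e^{h(z)^\top h(z')/\tau}]$. Your closing remark is, if anything, too cautious: only $q_h$ needs a $\sigma$-density, so reading the cross-entropy as $\mathbb{E}_{\tilde z\sim P_{\tilde Z|z}}[-\log q_h(\tilde z|z)]$, the same argument also covers the singular constrained conditional $P^K_{\tilde Z|z}$, which is exactly how the paper reuses it in Theorem~\ref{thm:asymptotic-violated-app}.
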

This interpretation allows us to analyze contrastive learning through the lens of distribution matching between the sampling mechanism $P_{\tilde{Z}|z}$ and the model measure $Q_{h,z}$.

\subsection{Inductive Bias}
\label{subsec:inductive-bias}

Inductive bias refers to the structural assumptions that constrain a learning algorithm's hypothesis space, enabling generalization~\cite{Vapnik1999-VAPTNO}. In contrastive learning, these assumptions arise through model architecture (e.g., translation equivariance in CNNs~\cite{lecun1998gradient}, attention in Transformers~\cite{dosovitskiy2021imageworth16x16words}) and geometric constraints on the embedding space.
In our framework, an encoder is geometry-preserving when the recovery map $h=f\circ g$ recovers the latent structure up to an orthogonal transformation, i.e., $h(z)\approx Az$ for some $A\in O(k)$.
Equivalently, the encoder approximates $f\approx A\circ g^{-1}$ on the data manifold.
An effective inductive bias therefore makes approximate inverses of the data-generating process easier to represent and optimize, while restricting arbitrary non-geometry-preserving maps that can also satisfy the contrastive loss.

When the diversity condition is violated, the contrastive objective alone cannot uniquely determine the latent geometry. Inductive bias then acts as a compensatory mechanism, restricting admissible solutions to those consistent with architectural priors. In Section~\ref{subsec:compensating-violated-condition}, we show that such biases are necessary for linearly identifiable reconstruction when diversity is violated. Crucially, we demonstrate experimentally (Section~\ref{sec:experimental-results}) that this necessity persists even asymptotically: the contrastive objective fails to recover latent geometry regardless of data quantity unless structural constraints are imposed.

\section{Theoretical Results}
\label{sec:theoretical-results}

We present the core theoretical results under the common assumption of $L^{2}$-normalized representations~\cite{grill2020bootstraplatentnewapproach, haas2023linkingneuralcollapsel2}. Unless stated otherwise, the results use the following standing assumptions:
\begin{itemize}
    \setlength{\itemsep}{0pt}
    \setlength{\parskip}{0pt}
    \item[(A1)] $\mathcal{Z}=\mathbb{S}^{k-1}$ is a unit hypersphere;
    \item[(A2)] the marginal $P_Z$ is uniform on $\mathcal{Z}$;
    \item[(A3)] the conditional $P_{\tilde{Z}|z}$ is vMF with concentration $\kappa>0$ before diversity violation is introduced;
    \item[(A4)] the encoder has sufficient capacity to realize any measurable recovery map considered below;
    \item[(A5)] representations are $L^2$-normalized, so $h:\mathcal{Z}\to\mathcal{Z}$.
\end{itemize}
The full-support result below is closest to prior hypersphere identifiability results, but the proof uses a probabilistic Mazur-Ulam theorem and requires no differentiability of $h$. The subsequent results are the main extension: they characterize the asymptotic global optima when sampling diversity is violated. We introduce the following definitions to formalize our analysis.

\begin{definition}[Isometry Almost Everywhere]
\label{def:isometry}
Let $(\mathcal{Z}, \delta)$ be a metric space with measure $\mu$. A measurable mapping $h: \mathcal{Z} \to \mathcal{Z}$ is an \textit{isometry almost everywhere} if there exists an isometry $e: \mathcal{Z} \to \mathcal{Z}$ such that $h(z) = e(z)$ for $\mu$-almost all $z \in \mathcal{Z}$.
\end{definition}

\begin{definition}[Equivalent Recovery Maps]
\label{def:equivalent-encoders}
Two mappings $h_1, h_2: \mathcal{Z} \to \mathcal{Z}$ are equivalent with respect to the contrastive loss if $\mathcal{L}_{\text{CL}}(h_1; \tau, M) = \mathcal{L}_{\text{CL}}(h_2; \tau, M)$ for fixed $\tau$ and $M$.
\end{definition}

\subsection{Reconstruction Under Full-Support Sampling}
\label{subsec:rec-under-full-condition}

\begin{lemma}[Full-Support vMF Implies Diversity]
\label{lem:vmf-implies-diversity}
Let $P_Z$ be uniform on $\mathcal{Z}=\mathbb{S}^{k-1}$, and let $P_{\tilde{Z}|z}$ have vMF density proportional to $\exp(\kappa z^\top \tilde{z})$ with $\kappa>0$ with respect to spherical surface measure. Then the diversity condition holds.
\end{lemma}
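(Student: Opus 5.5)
The plan is to show that, for every anchor $z\in\mathbb{S}^{k-1}$, the uniform law $P_Z$ and the vMF conditional $P_{\tilde Z|z}$ are both absolutely continuous with respect to the spherical surface measure $\sigma$, and that the vMF density is bounded away from zero. Mutual absolute continuity then follows, and in particular $P_Z\ll P_{\tilde Z|z}$. This is in fact stronger than the diversity condition, since it holds for \emph{every} anchor $z$, not merely $P_Z$-almost every one.

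First, by (A2) the uniform marginal has density $dP_Z/d\sigma = 1/\sigma(\mathbb{S}^{k-1})$. Second, the vMF conditional has density
\begin{equation*}
p(\tilde z\mid z)=\frac{e^{\kappa \tilde z^\top z}}{C_\kappa}, \qquad C_\kappa=\int_{\mathcal{Z}} e^{\kappa z'^\top z}\,d\sigma(z'),
\end{equation*}
as in \eqref{eq:vmf}. The constant $C_\kappa$ is finite and positive: the integrand is continuous, lies in $[e^{-\kappa},e^{\kappa}]$, and $\sigma(\mathbb{S}^{k-1})\in(0,\infty)$. By rotation invariance of $\sigma$, it is also independent of $z$.

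Next, Cauchy--Schwarz gives $\tilde z^\top z\ge -1$ for unit vectors, so
\begin{equation*}
p(\tilde z\mid z)\ \ge\ c:=\frac{e^{-\kappa}}{C_\kappa}>0 \qquad \text{for all } \tilde z,z\in\mathbb{S}^{k-1}.
\end{equation*}
Now take any measurable $A$ with $P_{\tilde Z|z}(A)=0$. Then
\begin{equation*}
0=\int_A p(\tilde z\mid z)\,d\sigma(\tilde z)\ \ge\ c\,\sigma(A),
\end{equation*}
so $\sigma(A)=0$, and therefore $P_Z(A)=\sigma(A)/\sigma(\mathbb{S}^{k-1})=0$. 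This is exactly the implication required in Definition~\ref{def:diversity}, and it holds for every $z$. As a byproduct, the Radon--Nikodym derivative $dP_Z/dP_{\tilde Z|z}$ is bounded above by $1/(c\,\sigma(\mathbb{S}^{k-1}))$.

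There is no real obstacle here. The only points needing care are measure-theoretic bookkeeping. One must confirm that the vMF law is genuinely defined by a density with respect to $\sigma$, which (A3) provides. One must also check that the normalizer is finite and nonzero, so that the positive lower bound on the density is uniform in $\tilde z$. Positivity alone would already suffice for the implication $\sigma(A)=0$, but the uniform bound makes the argument immediate and quantitative.
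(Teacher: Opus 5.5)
Your proof is correct and follows the same route as the paper: strict positivity of the vMF density with respect to $\sigma$ forces every $P_{\tilde Z|z}$-null set to be $\sigma$-null, and hence $P_Z$-null. The uniform lower bound $e^{-\kappa}/C_\kappa$ and the observation that the implication holds for every anchor, not just almost every one, are harmless refinements of the paper's one-line argument.
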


\begin{proof}
The vMF density is strictly positive on the whole sphere. Therefore, any measurable set with zero $P_{\tilde{Z}|z}$ measure also has zero spherical surface measure, and hence zero uniform marginal measure.
\end{proof}

Thus in the vMF setting the diversity condition is a consequence of full support, not an additional independent hypothesis. Contrastive learning then recovers the latent space up to orthogonal transformation.

\begin{theorem}[Linear Identifiability Under Full Diversity]
\label{thm:linear-identifiability-full-diversity}
Under a uniform marginal on $\mathcal{Z} = \mathbb{S}^{k-1}$ and full-support vMF conditional, any recovery map $h: \mathcal{Z} \to \mathcal{Z}$ that globally minimizes the asymptotic contrastive objective is an isometry almost everywhere:
\[
h(z) = Az \quad \text{for } \mu\text{-almost all } z \in \mathcal{Z},
\]
where $A \in O(k)$ is an orthogonal matrix.
\end{theorem}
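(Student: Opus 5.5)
We reduce global minimization of the asymptotic objective to exact distribution matching and then extract an isometry from it. By Theorem~\ref{thm:cl-cross-entropy}, the normalized limit of $\mathcal{L}_{\text{CL}}$ equals $\mathbb{E}_{z\sim P_Z}[H(P_{\tilde Z|z},Q_{h,z})]$. For every $z$,
\[
H(P_{\tilde Z|z},Q_{h,z}) = H(P_{\tilde Z|z}) + \mathrm{KL}(P_{\tilde Z|z}\,\|\,Q_{h,z}).
\]
The entropy term does not depend on $h$, so the objective is bounded below by $\mathbb{E}_z[H(P_{\tilde Z|z})]$. The bound is attained by $h=\mathrm{id}$ whenever $\tau=1/\kappa$, since then $q_h=p$. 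Assuming this matched-temperature regime (otherwise we would have to show that the minimizer's inner products are rescaled, which breaks on the unit sphere; I would flag this as an implicit assumption), any global minimizer satisfies $\mathrm{KL}(P_{\tilde Z|z}\|Q_{h,z})=0$ for $P_Z$-a.e.\ $z$. Hence $q_h(\cdot|z)=p(\cdot|z)$ holds $\sigma$-a.e.\ in $\tilde z$, for a.e.\ $z$.

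Next we translate this density equality into preservation of inner products. Taking logarithms gives
\[
h(\tilde z)^\top h(z)/\tau - \kappa\,\tilde z^\top z = \log C_h(z) - \log C(z).
\]
The right-hand side is independent of $\tilde z$, and $C(z)$ is constant in $z$ by rotational invariance of vMF. This identity holds for $(\sigma\otimes\sigma)$-a.e.\ pair $(z,\tilde z)$ by Fubini. Exchanging the roles of $z$ and $\tilde z$ (symmetry of both sides' first terms) shows that $c(z):=\log C_h(z)-\log C(z)$ equals $c(\tilde z)$ for a.e.\ pair, so $c$ is an a.e.\ constant $c_0$. Evaluating on the diagonal is not allowed pointwise. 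Instead, I would use that the pair set has full measure, fix a typical $z$, and send $\tilde z\to$ points near $z$ in a density-point sense. Since $\|h\|=1$, $|h(\tilde z)^\top h(z)|\le 1$ must be compatible with $\kappa\tilde z^\top z$ ranging over $[-\kappa,\kappa]$ up to $\tau$. This forces $\tau\kappa=1$ and $c_0=0$, hence $h(\tilde z)^\top h(z)=\tilde z^\top z$ a.e. Equivalently, on the sphere, $\|h(z)-h(\tilde z)\|=\|z-\tilde z\|$ for almost every pair.

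Finally we pass from an almost-everywhere pairwise isometry to a genuine isometry. This is exactly where the probabilistic Mazur–Ulam theorem of \cite{zaliaduonis2026probabilisticgeneralizationmazurulamtheorem} is invoked: a measurable map preserving distances for $\mu\otimes\mu$-almost all pairs agrees $\mu$-a.e.\ with an isometry $e$ of $\mathbb{S}^{k-1}$. If I had to argue directly, I would pick a full-measure set $G$ of points $z$ for which the identity holds for a.e.\ $\tilde z$. I would then choose $k$ points in $G$ whose images and preimages span $\mathbb{R}^k$ and define the linear map $A$ via the Gram identity. Next I would show $h(z)=Az$ on $G$ by matching inner products against a spanning set. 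Finally I would extend $A$ to all of $\mathbb{S}^{k-1}$, using the fact that isometries of the sphere fixing the origin structure are restrictions of $O(k)$.

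The main obstacle is the measure-theoretic bookkeeping in the second and third steps. Equalities hold only off null sets of pairs, so fixing constants and selecting a generic spanning frame inside the good set requires care with Fubini sections. This is precisely what the probabilistic Mazur–Ulam result is designed to absorb, so the proof would lean on it rather than on differentiability of $h$.
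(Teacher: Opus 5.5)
Your proposal is correct and follows the paper's route. You reduce to the conditional cross-entropy, show a global minimizer matches the vMF conditional, and combine the unit-norm bound with the two-sided essential range of $\tilde z^\top z$ to get $h(z)^\top h(\tilde z)=z^\top\tilde z$ almost everywhere. You then invoke the probabilistic Mazur--Ulam theorem; your Gram-frame fallback is a valid elementary substitute for that theorem, provided the $k$ frame points are chosen one after another, each inside the full-measure Fubini sections of the earlier ones.

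Your flag on $\kappa\tau=1$ is well placed, and the statement genuinely needs it. The paper's step ``$H$ is minimized when $p=q_h$'' presupposes that matching is attainable, and on the unit sphere this happens only when $\kappa\tau=1$; for small $\kappa$ at fixed $\tau$, a constant map already beats every orthogonal map. Note, however, that the bound $|h(z)^\top h(\tilde z)|\le 1$ alone only gives $\tau\kappa+\tau|c_0|\le 1$. So it cannot \emph{force} $\tau\kappa=1$, as your second step claims (and as the paper's ``bounds must equal the extremes'' also claims). Your step-one assumption supplies $\tau\kappa=1$, after which the range argument correctly yields $c_0=0$.
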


\begin{proof}
    See Appendix~\ref{app:proof-full-diversity}.
\end{proof}

\emph{Proof sketch.} The asymptotic InfoNCE objective reduces to conditional cross-entropy. Under the full-support vMF conditional, any minimizer must match conditionals and therefore preserve inner products almost surely. On the sphere, this gives distance preservation, and the probabilistic Mazur-Ulam theorem extends this almost-sure isometry to a global orthogonal map.

This represents an ideal scenario: when the sampling mechanism perturbs all latent factors, the learned representations preserve pairwise distances, yielding a separable latent space suitable for downstream tasks. However, real-world augmentation pipelines rarely have full support, motivating the analysis in the following section.

\subsection{Reconstruction Under Violated Diversity}
\label{subsec:rec-under-violated-condition}

In practice, the diversity condition is rarely satisfied. Most augmentation pipelines preserve certain semantic features while perturbing others. We model this by decomposing the latent vector $z = (u,v)$ into an invariant component $u \in \mathbb{R}^m$ and a varying component $v \in \mathbb{R}^{\ell}$, with $m+\ell=k$ and $m,\ell>0$. Let
\[
\begin{aligned}
K(z) &:= \{\tilde{z}=(\tilde{u},\tilde{v}) \in \mathbb{S}^{k-1}: \tilde{u}=u\},\\
K(z) &\cong \mathbb{S}^{\ell-1}_{r(z)}, \qquad r(z):=\sqrt{1-\|u\|^2}.
\end{aligned}
\]
For $r(z)>0$, let $\sigma_{K(z)}$ denote the intrinsic surface measure on $K(z)$. The constrained positive-pair conditional is the probability measure $P^K_{\tilde{Z}|z}$ supported on $K(z)$ with Radon-Nikodym density
\begin{equation}
    \label{eq:violated-diversity}
    \frac{dP^K_{\tilde{Z}|z}}{d\sigma_{K(z)}}(\tilde{z})
    =
    \frac{e^{\kappa z^{\top} \tilde{z}}}
    {\int_{K(z)} e^{\kappa z^{\top} z'} \, d\sigma_{K(z)}(z')}.
\end{equation}
This measure is singular with respect to ambient spherical measure on $\mathbb{S}^{k-1}$, but it is a well-defined probability law on the lower-dimensional submanifold $K(z)$.

This collapses the sampling support from $\mathbb{S}^{k-1}$ onto a lower-dimensional submanifold $K(z)$, mirroring the content-style framework of~\cite{vonkügelgen2022selfsupervisedlearningdataaugmentations}.

\begin{theorem}[Loss of Identifiability Under Violated Diversity]
\label{thm:no-identifiability-violated}
For the asymptotic contrastive objective obtained as $M\to\infty$ under the constrained conditional (Equation~\ref{eq:violated-diversity}) and standard full-sphere negatives, orthogonal recovery is not globally optimal. There exists a recovery map $h:\mathcal{Z}\to\mathcal{Z}$ that is not induced by any orthogonal transformation such that
\[
\mathcal{L}_{\mathrm{CL}}(h) < \mathcal{L}_{\mathrm{CL}}(\tilde{h}),
\qquad \forall\, \tilde{h}\in O(k).
\]
\end{theorem}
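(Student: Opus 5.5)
Start from the asymptotic form of the objective as $M\to\infty$. Repeating the argument behind Theorem~\ref{thm:cl-cross-entropy} with the constrained positives $P^K_{\tilde Z|z}$ but full-sphere negatives, the normalized loss becomes
\[
\mathcal{L}_{\mathrm{CL}}(h)=\mathbb{E}_{z\sim P_Z}\Big[-\tfrac{1}{\tau}\,\mathbb{E}_{\tilde z\sim P^K_{\tilde Z|z}} h(z)^\top h(\tilde z)\Big]+\mathbb{E}_{z\sim P_Z}\Big[\log\!\int_{\mathbb{S}^{k-1}} e^{h(z)^\top h(z')/\tau}\,d\sigma(z')\Big],
\]
up to additive constants. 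The first term is an alignment term that only involves pairs with $\tilde u=u$. The second is a uniformity term over the whole sphere.

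First I evaluate the loss on $O(k)$. For $\tilde h(z)=Az$ we have $\tilde h(z)^\top\tilde h(\tilde z)=z^\top\tilde z$. Since $\sigma$ is rotation invariant, the uniformity term equals the constant $C_\tau:=\log\int e^{z^\top z'/\tau}d\sigma(z')$, the same for every $z$. Hence every orthogonal map has the same loss $L_0=-\tfrac1\tau\mathbb{E}[z^\top\tilde z]+C_\tau$. This reduces the claim to beating the single number $L_0$.

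Next I construct a competitor that respects the equivalence classes $K(z)$. It should bring positives closer without worsening uniformity. The natural choice is a measure-preserving map $h$ on $\mathbb{S}^{k-1}$ that leaves the $u$-coordinate untouched and contracts each fibre $K(z)$ in the $v$-direction. Pure contraction would break uniformity, so instead I take a measure-preserving "fibre-tightening" map. Writing $z=(u,r\omega)$ with $\omega\in\mathbb{S}^{\ell-1}$, one option is a map that is non-orthogonal in $u$, such as $h(u,r\omega)=(\phi(u),\psi(r)\omega)$, where $\phi$ is a radial rearrangement of the ball $\{\|u\|\le1\}$ chosen so that $h_*U(\mathbb{S}^{k-1})=U(\mathbb{S}^{k-1})$.

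A cleaner alternative is a one-parameter perturbation. Let $h_\varepsilon=\exp(\varepsilon X)$ be the flow of a divergence-free vector field $X$ on the sphere, so that $h_\varepsilon$ preserves $\sigma$ and its image remains uniform. I would then show that $\frac{d}{d\varepsilon}\mathcal{L}_{\mathrm{CL}}(h_\varepsilon)|_{0}<0$, or, if the first variation vanishes by symmetry, that the second variation is negative for a suitable $X$. Under this approach the uniformity term is no longer exactly constant. Its variation, however, can be computed from the kernel $e^{z^\top z'/\tau}$, while the alignment variation depends only on fibre pairs. The key point is that $P^K$ lives on a lower-dimensional set, so the alignment gradient and the uniformity gradient are generically not balanced at the identity. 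This is unlike the full-support case, where Theorem~\ref{thm:linear-identifiability-full-diversity} shows the identity is optimal.

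I expect the main obstacle to be this strict inequality. At $h=\mathrm{id}$, which is a critical point by symmetry, I would compute the second variation explicitly in the simplest case $k=2$, $m=\ell=1$. There $K(z)=\{(u,\pm\sqrt{1-u^2})\}$, so positives are just the reflection pair plus the point itself, and the alignment term is explicit. Taking $X$ to be the field that rotates points toward the $u$-axis, i.e., one that brings $(u,v)$ and $(u,-v)$ closer together, I would show the alignment gain is of first order in $\varepsilon$ while the uniformity cost is second order. Such a field can be made area-preserving by compensating elsewhere in the $u$-direction. If that works, the construction generalizes by acting on $\|v\|$ fibrewise for general $k$.

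Finally, any $\varepsilon\neq0$ gives $h_\varepsilon\notin O(k)$, because it fixes the $u$-axis setwise while moving fibres non-rigidly. The resulting strict inequality $\mathcal{L}_{\mathrm{CL}}(h_\varepsilon)<L_0$ holds simultaneously for all $\tilde h\in O(k)$, since every orthogonal map has loss $L_0$.
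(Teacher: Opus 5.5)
\textbf{There is a genuine gap: your construction never produces the strict inequality.} Your reduction is fine. The asymptotic objective splits into alignment plus the full-sphere uniformity term, and every $\tilde h\in O(k)$ gives the same value $L_0$. The failure comes from insisting on measure-preserving competitors.

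First, your remark that along a $\sigma$-preserving flow the uniformity term is ``no longer exactly constant'' is wrong. If $h_*\sigma=\sigma$, then $\int e^{h(z)^\top h(z')/\tau}d\sigma(z')=\int e^{h(z)^\top w/\tau}d\sigma(w)=e^{C_\tau}$ for every $z$, so only alignment matters.

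Second, and fatally for the perturbative plan, alignment has zero first variation along every divergence-free field $X$. By exchangeability of $(z,\tilde z)$, the derivative of $\mathbb{E}[h_\varepsilon(z)^\top h_\varepsilon(\tilde z)]$ at $\varepsilon=0$ is $2\,\mathbb{E}[X(z)^\top\mathbb{E}[\tilde z\mid z]]$. Here $\mathbb{E}[\tilde z\mid z]=(u,a\,v)$ with $a=a(\|u\|)\in(0,1)$. Tangency gives $X(z)^\top(u,av)=(1-a)\,X_u^\top u=X^\top\nabla\Phi$, where $\nabla$ is the spherical gradient and $\Phi$ is a function of $\|u\|^2$. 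This integrates to zero against $\sigma$ when $\operatorname{div}X=0$. So the identity really is a critical point within your class. The first-order alignment gain you plan to exhibit cannot exist there, and you are left with a second variation whose sign you have not determined.

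Your test case cannot even start. On $\mathbb{S}^1$ every divergence-free field is constant, so every measure-preserving flow is a rotation, and no compensation can bring $(u,v)$ and $(u,-v)$ together.

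Your first option fails too. The condition $h_*\sigma=\sigma$ forces $\|\phi(u)\|\overset{d}{=}\|u\|$. The alignment change is then $\mathbb{E}[(\|\phi(u)\|^2-\|u\|^2)(1-a)]$, which is $\le 0$ by the rearrangement inequality because $1-a$ increases with $\|u\|$.

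\textbf{The missing idea, which the paper uses, is that measure preservation is unnecessary.} The uniformity term is stationary at the identity along \emph{every} tangent perturbation $g$. This follows because $\int z'e^{w^\top z'/\tau}d\sigma(z')=\alpha w$ for a scalar $\alpha$, and $z^\top g(z)=0$. So the structure you intended (first-order alignment gain, higher-order uniformity change) is correct, but only outside the measure-preserving class.

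The paper takes the non-measure-preserving fibre contraction $h_\lambda(u,v)=(u,\lambda v)/\|(u,\lambda v)\|$. Its uniformity change is $o(1-\lambda)$. Alignment gains at first order, since $\frac{d}{dt}\frac{\|u\|^2+t\,v^\top\tilde v}{\|u\|^2+t\|v\|^2}=\frac{\|u\|^2(v^\top\tilde v-\|v\|^2)}{(\|u\|^2+t\|v\|^2)^2}<0$ whenever $v\neq\tilde v$. Any $\lambda<1$ close to $1$ then beats $L_0$.

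If you want to keep exact measure preservation, drop perturbations and use a non-invertible map that is constant on fibres. For $k=2$, $h(u,v)=(2u^2-1,\,2u\sqrt{1-u^2})$ (that is, $\theta\mapsto 2|\theta|$) pushes $\sigma$ to $\sigma$ and satisfies $h(z)=h(\tilde z)$ on all positive pairs. Its loss is $-1/\tau+C_\tau<L_0$. In general, $h(z)=\psi(u)$ works for any measurable $\psi$ with $\psi_*\mathrm{Law}(u)=\sigma$.
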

This is a statement about global values of the limiting objective, not about finite-sample optimization dynamics.

\begin{proof}
    See Appendix~\ref{app:proof-violated-diversity}.
\end{proof}

\emph{Proof sketch.} The constrained conditional fixes $u$ and only varies $v$, while the standard model conditional still normalizes over the full sphere. We construct an explicit non-orthogonal map $h_\lambda(u,v)=(u,\lambda v)/\|(u,\lambda v)\|$ with $\lambda<1$ close to one. Because positive pairs share $u$, this map improves the alignment term to first order, while the uniformity term is stationary at the identity, yielding lower asymptotic loss than any orthogonal map.

The consequence is concrete: arbitrarily expressive encoders are actively disincentivized from learning orthogonal recovery maps when the diversity condition is violated. The contrastive objective rewards geometry-distorting solutions, leading to poorly structured latent spaces where semantic relationships are not preserved. Consequently, downstream tasks that rely on meaningful distance relationships in the representation space face a fundamental bottleneck that cannot be overcome by increasing encoder capacity alone.

\subsection{Correcting the Model}
\label{subsec:compensating-violated-condition}

The support mismatch between $P^K_{\tilde{Z}|z}$ and the standard full-sphere model conditional prevents the InfoNCE objective from favoring isometric solutions. We address this by constraining the model conditional to the same submanifold $K(z)$:

\begin{equation}
\label{eq:corrected-qh}
q^K_{h,z}(\tilde{z})
=
\frac{e^{h(z)^{\top}h(\tilde{z})/\tau}}
{\int_{K(z)} e^{h(z)^{\top}h(z')/\tau}\,d\sigma_{K(z)}(z')},
\qquad \tilde{z}\in K(z).
\end{equation}

Equivalently, for a positive pair $(z,\tilde{z})$ and negatives $z_1^-,\ldots,z_M^- \overset{\mathrm{i.i.d.}}{\sim} U(K(z))$, the per-anchor adapted loss is
\begin{equation}
\label{eq:adapted-infonce-per-anchor}
\begin{aligned}
\ell_{\mathrm{adapt}}
&= -\log
\frac{\exp(h(z)^\top h(\tilde{z})/\tau)}
{D_a(z,\tilde{z})},\\
D_a(z,\tilde{z})
&= \exp(h(z)^\top h(\tilde{z})/\tau)
\\
&\quad + \sum_{j=1}^{M}\exp(h(z)^\top h(z_j^-)/\tau).
\end{aligned}
\end{equation}
This corresponds to drawing negative samples from $K(z)$ rather than the full latent space. Exact sampling requires access to the invariant component $u$, which is generally unavailable in real data. In practice, same-anchor augmentations provide a proxy: independent transformations of the same anchor preserve the components fixed by the augmentation mechanism while varying the remaining components. Appendix~\ref{app:sampling} gives the resulting training step, but the adapted loss should be read primarily as a theoretical diagnostic rather than a fully practical replacement for standard InfoNCE.

\begin{theorem}[Orthogonal Mappings as Minimizers]
\label{thm:orthogonal-minimizer}
Under the corrected model (Equation~\ref{eq:corrected-qh}), any orthogonal transformation $h \in O(k)$ minimizes the asymptotic contrastive loss.
\end{theorem}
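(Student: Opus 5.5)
The plan is to repeat the distribution-matching argument behind Theorem~\ref{thm:cl-cross-entropy}, but on the fibre $K(z)$ instead of the full sphere, and then to observe that orthogonal maps make the model conditional coincide with the sampling conditional. I will assume the temperature is matched to the concentration, $\tau = 1/\kappa$, as in the vMF identifiability setting of~\cite{zimmermann2022contrastivelearninginvertsdata}. Without this matching even the identity cannot reproduce $P^K_{\tilde Z|z}$, so the statement should be read under that convention.

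\textbf{Step 1 (asymptotic form of the adapted loss).} Fix an anchor $z$ with $r(z)>0$. This holds for $P_Z$-almost every $z$, because $\{\|u\|=1\}$ is a lower-dimensional subset of $\mathbb{S}^{k-1}$ when $\ell\ge 1$. Negatives are drawn i.i.d.\ from $U(K(z))$, so the law of large numbers gives
\[
\frac{1}{M}\sum_{j=1}^M e^{h(z)^\top h(z_j^-)/\tau} \to \frac{1}{|K(z)|}\int_{K(z)} e^{h(z)^\top h(z')/\tau}\,d\sigma_{K(z)}(z') .
\]
The limit is exchanged with the expectation by dominated convergence; the integrand is bounded because $h$ maps into the unit sphere, so $|h(z)^\top h(\cdot)|\le 1$. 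The positive term divided by $M$ vanishes. Hence
\[
\lim_{M\to\infty}\mathcal{L}_{\mathrm{adapt}}-\log M = \mathbb{E}_{z\sim P_Z}\bigl[H(P^K_{\tilde Z|z},Q^K_{h,z})\bigr]-\mathbb{E}_{z}\bigl[\log|K(z)|\bigr],
\]
with cross-entropy taken with respect to $\sigma_{K(z)}$. The last term does not depend on $h$, so minimizing the asymptotic loss amounts to minimizing the expected fibrewise cross-entropy.

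\textbf{Step 2 (lower bound and attainment).} By Gibbs' inequality, $H(P,Q)=H(P)+\mathrm{KL}(P\|Q)\ge H(P)$, with equality iff $Q=P$ almost everywhere. Therefore $\mathbb{E}_z[H(P^K_{\tilde Z|z})]$ is a lower bound over all measurable $h:\mathcal Z\to\mathcal Z$. Now let $h(z)=Az$ with $A\in O(k)$. Then $h(z)^\top h(z')=z^\top z'$, so
\[
q^K_{h,z}(\tilde z)=\frac{e^{\kappa z^\top\tilde z}}{\int_{K(z)}e^{\kappa z^\top z'}\,d\sigma_{K(z)}(z')},
\]
which is exactly the density in Equation~\ref{eq:violated-diversity}. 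The KL term therefore vanishes for every anchor, the lower bound is attained, and every $A\in O(k)$ is a global minimizer. All such maps are also equivalent in the sense of Definition~\ref{def:equivalent-encoders}. Contrast this with the full-sphere denominator used in Theorem~\ref{thm:no-identifiability-violated}, where the model conditional spreads mass off $K(z)$ and the bound cannot be attained.

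\textbf{Main obstacle.} I expect Step 1 to be the delicate part, because $P^K_{\tilde Z|z}$ is singular with respect to ambient measure. Every density and entropy must be taken with respect to $\sigma_{K(z)}$, and $z\mapsto K(z)$ must be a measurable family so that the outer expectation is well defined. I would first handle this by parametrizing $K(z)$ as $\{u\}\times r(z)\mathbb{S}^{\ell-1}$. This makes the fibre measures explicit, and the $M$-limit then reduces to the argument of Theorem~\ref{thm:cl-cross-entropy} applied fibrewise on a scaled sphere $\mathbb{S}^{\ell-1}$. The statement does not claim uniqueness, so no converse (that minimizers must be orthogonal) is needed.
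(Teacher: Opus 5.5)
Your proposal is correct and follows the paper's own route. The paper likewise first reduces the adapted loss to the fibrewise cross-entropy on $K(z)$ minus $\mathbb{E}_z[\log|K(z)|]$ (Theorem~\ref{thm:adjusted-cross-entropy-app}), then observes that for $h\in O(k)$ and $\kappa\tau=1$ the model conditional $q^K_{h,z}$ coincides with $P^K_{\tilde{Z}|z}$, so the KL term vanishes and the cross-entropy is minimal. Your extra care (dominated convergence, $r(z)>0$ almost everywhere, and the explicit Gibbs lower bound over all measurable $h$) tightens steps the paper leaves implicit without changing the argument.
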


\begin{proof}
See Appendix~\ref{app:proof-corrected}.
\end{proof}

\emph{Proof sketch.} Once the model conditional is restricted to the same support $K(z)$ as the true conditional, any orthogonal map preserves the inner products that define the vMF density on that support. Thus the model conditional matches the true conditional when $\kappa=1/\tau$, minimizing cross-entropy. This proves achievability of orthogonal solutions, but not uniqueness.

Although the corrected objective admits orthogonal solutions, it does not guarantee that all minimizers are orthogonal. Thus, the adapted objective should not be interpreted as a fully general solution: it removes the support mismatch and makes geometry-preserving recovery achievable, but it does not make such recovery unique. The sampling mechanism determines which solutions are achievable, while inductive bias influences which solution is selected during optimization. This non-uniqueness highlights the necessity of inductive bias for selecting geometry-preserving solutions.

\section{Experimental Results}
\label{sec:experimental-results}

\subsection{Synthetic Dataset Experiments}

We validate the theoretical predictions from Section~\ref{sec:theoretical-results} using a controlled synthetic setup with a spherical latent space $\mathcal{Z} = \mathbb{S}^{2}$.\footnote{Code is available at \url{https://github.com/BosonicJustin/CLTheory}.} Following the assumptions of our theoretical framework, the marginal law is uniform over the sphere, and positive pairs are sampled according to a von Mises-Fisher (vMF) conditional law $P_{\tilde{Z}|z}$ with density $p(\tilde{z}|z) = \text{vMF}(z, \kappa)$ and concentration parameter $\kappa = 1/\tau$ (see Equation~\ref{eq:kappa-tau-relation}), where $\tau$ is the temperature in the InfoNCE loss. The generative processes used in our experiments are illustrated in Figure~\ref{fig:generative-processes}. Details of the sampling procedures are provided in Appendix~\ref{app:sampling}.

\begin{figure*}[!h]
    \centering
    \includegraphics[width=\textwidth]{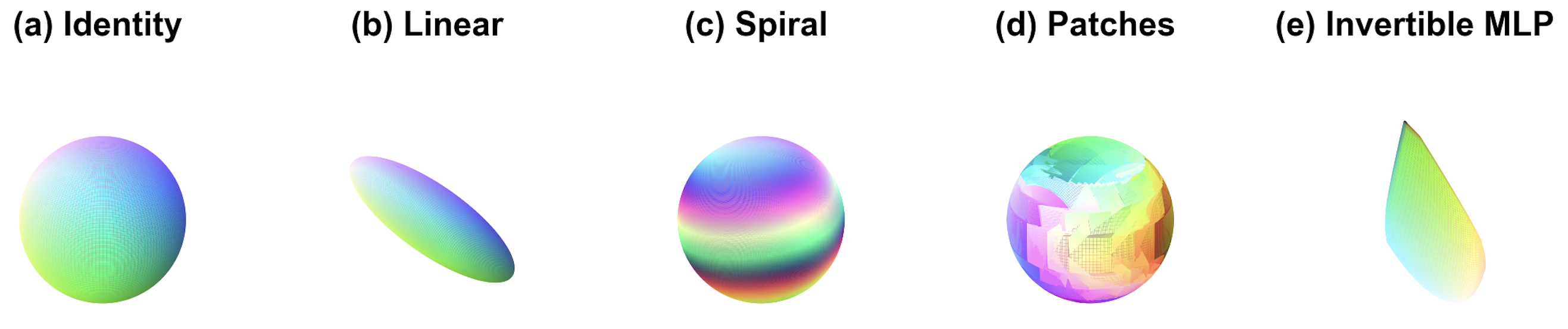}
    \caption{Generative processes mapping the unit sphere to observation space. Colors encode input coordinates (RGB = xyz), illustrating how each transformation warps the latent space: (a) identity preserves the sphere, (b) linear maps to an ellipsoid, (c) spiral twists points around the vertical axis, (d) patches applies piecewise rotations creating discontinuities, and (e) invertible MLP produces smooth nonlinear deformations.}
    \label{fig:generative-processes}
\end{figure*}

\paragraph{Experimental Setup.}

We evaluate five generative processes $g: \mathcal{Z} \to \mathcal{X}$ of varying complexity: Identity, injective Linear map ($\mathbb{S}^2 \to \mathbb{R}^7$), Spiral rotation, Patches, and invertible MLP~\cite{hyvarinen2016unsupervised}. Detailed definitions are provided in Appendix~\ref{app:generative-processes}. For each generative process, we compare two encoder architectures: an MLP encoder with hidden dimensions $[128, 256, 256, 256, 128]$ representing low inductive bias, and an inverse encoder designed to invert the corresponding generative process, representing high inductive bias. The invertible MLP is an exception, as it lacks a strict analytic inverse; we therefore evaluate only the MLP encoder for this generative process. All experiments use InfoNCE loss with temperature $\tau = 0.3$, Adam optimizer~\cite{kingma2015adam} (lr = $10^{-3}$), batch size 2000, and 5000 iterations. The generative process $g$ remains frozen throughout training, and each configuration is evaluated over 5 independent runs.

\paragraph{Evaluation Metrics.}

We assess reconstruction quality using linear identifiability ($R^2$), which measures recovery up to affine transformation as in~\cite{hyvarinen2016unsupervised}. Additional metrics (Mean Correlation Coefficient and Angular Preservation Error) are provided in Appendix~\ref{app:metrics}.

\paragraph{Results.}

We summarize the experimental results in Table~\ref{tab:all-processes-horizontal} and Figure~\ref{fig:synthetic-experiments}.

\paragraph{Diversity Condition Holds.} When the diversity condition is satisfied, the MLP encoder achieves near-perfect linear identifiability ($R^2 \geq 0.99$) across all five generative processes, confirming the theoretical prediction of Theorem~\ref{thm:linear-identifiability-full-diversity}. This demonstrates that a sufficiently expressive encoder can recover a representation linearly identifiable with the ground-truth latent space when the diversity condition holds, across varying degrees of injective generative processes.

\paragraph{Diversity Condition Violated.} When the diversity condition is violated by fixing the first latent dimension during positive pair sampling, MLP performance collapses dramatically. Linear identifiability drops to $R^2 \in [0.05, 0.13]$ for Identity, MLP, and Linear generative processes, and $R^2 = 0.25$ for Patches. The Spiral process is an exception ($R^2 = 0.72$), maintaining poor but not catastrophic performance with high variance across runs. This validates Theorem~\ref{thm:no-identifiability-violated}: the contrastive objective alone no longer incentivizes geometry-preserving solutions when the conditional sampling support is restricted.

\paragraph{Inductive Bias Compensation.} Under violated diversity, incorporating inductive bias through inverse encoders restores near-perfect recovery ($R^2 \geq 0.88$), with Identity, Linear, and Spiral processes achieving $R^2 \geq 0.99$. The inverse encoders, designed to mirror the structure of each generative process, successfully recover the latent geometry even when the sampling mechanism provides insufficient information. This demonstrates that architectural constraints can compensate for deficiencies in the sampling regime, highlighting the complementary roles of data augmentation and model design in contrastive learning.

\paragraph{Adapted InfoNCE Loss.} The adapted InfoNCE loss partially recovers MLP performance under violated diversity ($R^2 \approx 0.60\text{--}0.65$), representing a substantial improvement over standard InfoNCE ($R^2 \approx 0.05\text{--}0.25$ for most processes). However, this falls short of the performance achieved with appropriate inductive bias ($R^2 \geq 0.88$). This confirms Theorem~\ref{thm:orthogonal-minimizer}: correcting the model conditional makes isometric solutions achievable but does not guarantee them. From a practical standpoint, these results suggest that while loss modifications can mitigate the effects of violated diversity, incorporating architectural priors remains the more effective strategy, aligning with the empirical success of high inductive bias architectures such as CNNs and Vision Transformers in contrastive learning pipelines.

\begin{table*}[!h]
  \centering
  \small
  \setlength{\tabcolsep}{8pt}
  \caption{Linear identifiability ($R^2$) across generative processes under different experimental conditions. Results reported as mean $\pm$ std across 5 random seeds. The Invertible MLP process lacks a closed-form inverse, so no inverse encoder is available (indicated by N/A).}
  \label{tab:all-processes-horizontal}
  \begin{tabular}{lcccc}
  \toprule
  \includegraphics[height=1.3cm]{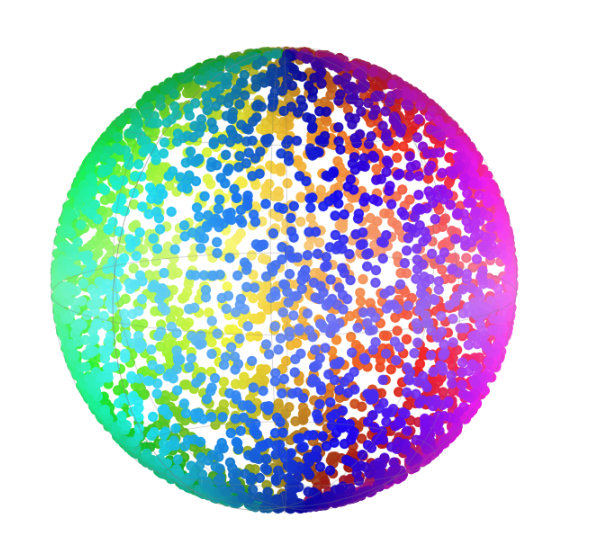}
  & \includegraphics[height=1.3cm]{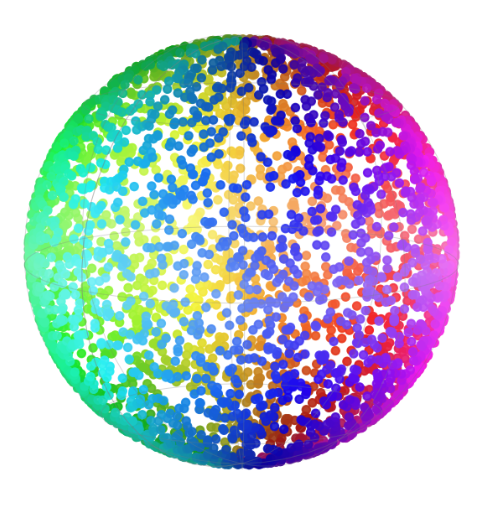}
  & \includegraphics[height=1.3cm]{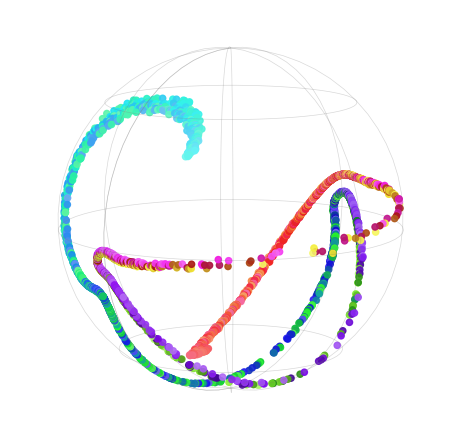}
  & \includegraphics[height=1.3cm]{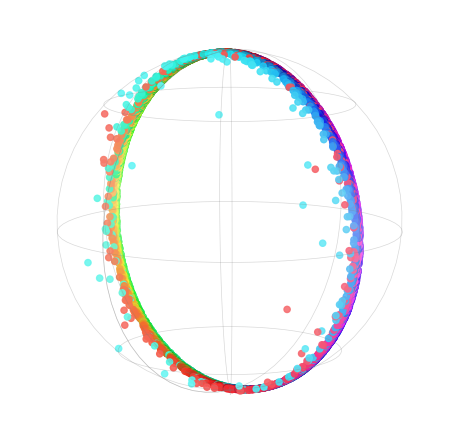}
  & \includegraphics[height=1.3cm]{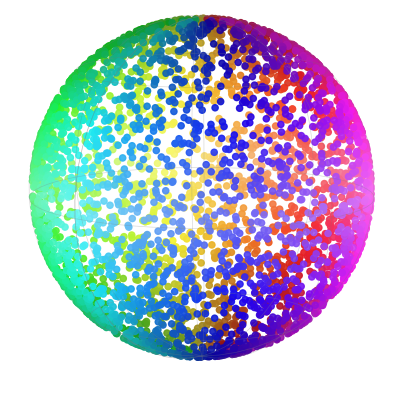} \\[-0.2em]
  & \textbf{Diversity Holds} & \multicolumn{3}{c}{\textbf{Diversity Violated}} \\
  \cmidrule(lr){2-2} \cmidrule(lr){3-5}
  \textbf{Generative Process} & \textbf{InfoNCE} & \textbf{InfoNCE} & \textbf{InfoNCE Adapted} & \textbf{InfoNCE + Ind. Bias} \\
  \midrule
  Identity & $1.00 \pm 0.00$ & $0.06 \pm 0.03$ & $0.63 \pm 0.00$ & $0.99 \pm 0.00$ \\
  Invertible MLP & $1.00 \pm 0.00$ & $0.13 \pm 0.12$ & $0.60 \pm 0.01$ & N/A \\
  Linear & $1.00 \pm 0.00$ & $0.05 \pm 0.06$ & $0.65 \pm 0.03$ & $0.99 \pm 0.00$ \\
  Patches & $0.99 \pm 0.00$ & $0.25 \pm 0.03$ & $0.63 \pm 0.02$ & $0.88 \pm 0.01$ \\
  Spiral & $1.00 \pm 0.00$ & $0.72 \pm 0.33$ & $0.62 \pm 0.01$ & $1.00 \pm 0.00$ \\
  \bottomrule
  \end{tabular}
\end{table*}

\subsection{CIFAR-10 Experiments}
\label{subsec:cifar-experiments}

We test the qualitative implications of our theory on CIFAR-10 using SimCLR~\cite{chen2020simpleframeworkcontrastivelearning} with three encoder architectures of comparable size ($\sim$11M parameters each). We rate inductive bias by how closely each architecture's structural priors align with the spatial generative structure of natural images. Equivalently, this measures how strongly the architecture favors approximate inverses of image formation. ResNet-18~\cite{he2016deep} has high inductive bias because convolutional layers encode spatial locality and translation equivariance. Vision Transformer (ViT)~\cite{dosovitskiy2021imageworth16x16words} with $4 \times 4$ patches has medium inductive bias because self-attention can learn global spatial structure, but does not hardcode locality to the same extent. The MLP has low inductive bias because it treats each image as a flat vector and imposes little domain-specific restriction on the hypothesis space. All encoders project to 512-dimensional $L^2$-normalized embeddings and are initialized with random weights.
Because CIFAR-10 does not provide ground-truth latent coordinates, linear probe accuracy is not a direct test of identifiability. We use it as a real-data sanity check for the qualitative predictions of the theory: richer sampling mechanisms and better aligned architectural inductive bias should improve downstream representation quality.

\paragraph{Augmentation Regimes.} We design three augmentation regimes to vary the degree to which the diversity condition is approximated (Figure~\ref{fig:augmentation_examples}): (1)~\textbf{All}: color jitter, random crop, horizontal flip, grayscale, blur, and cutout~\cite{devries2017improved}, perturbing as many features as possible; (2)~\textbf{Crop Only}: Perturbing all features slightly, but with a lesser degree than All augmentations; (3)~\textbf{All w/o crop}: all augmentations except crop, varying the features in a more fixed manner. Training uses InfoNCE with $\tau = 0.5$, Adam optimizer~\cite{kingma2015adam} (lr = $3 \times 10^{-4}$), batch size 2000, for 200 epochs. We evaluate via linear probing over 5 runs per configuration.

\paragraph{Results.}
The results (Figure~\ref{fig:linear_probe_accuracy}) are consistent with the qualitative predictions of the theory. First, the ``All'' augmentation regime yields the highest accuracy across all architectures, consistent with the expectation that broader sampling support improves representation quality. Second, higher inductive bias consistently improves performance: ResNet-18 outperforms ViT, which outperforms MLP, across all augmentation regimes. Third, and most importantly, the performance gap between architectures widens as the diversity condition is increasingly violated. Under the ``All'' regime, the gap between ResNet-18 and MLP is moderate; under ``All w/o crop'', this gap increases substantially. This interaction effect supports the compensatory role of inductive bias: when sampling diversity is insufficient, architectural priors become critical for recovering useful representations. Notably, for MLP the ``All'' and ``Crop'' regimes yield nearly identical performance, suggesting that without appropriate inductive bias, the encoder cannot exploit the additional augmentations, since cropping provides the majority of the meaningful signal.

\begin{figure}[!ht]
    \centering
    \includegraphics[width=0.48\textwidth]{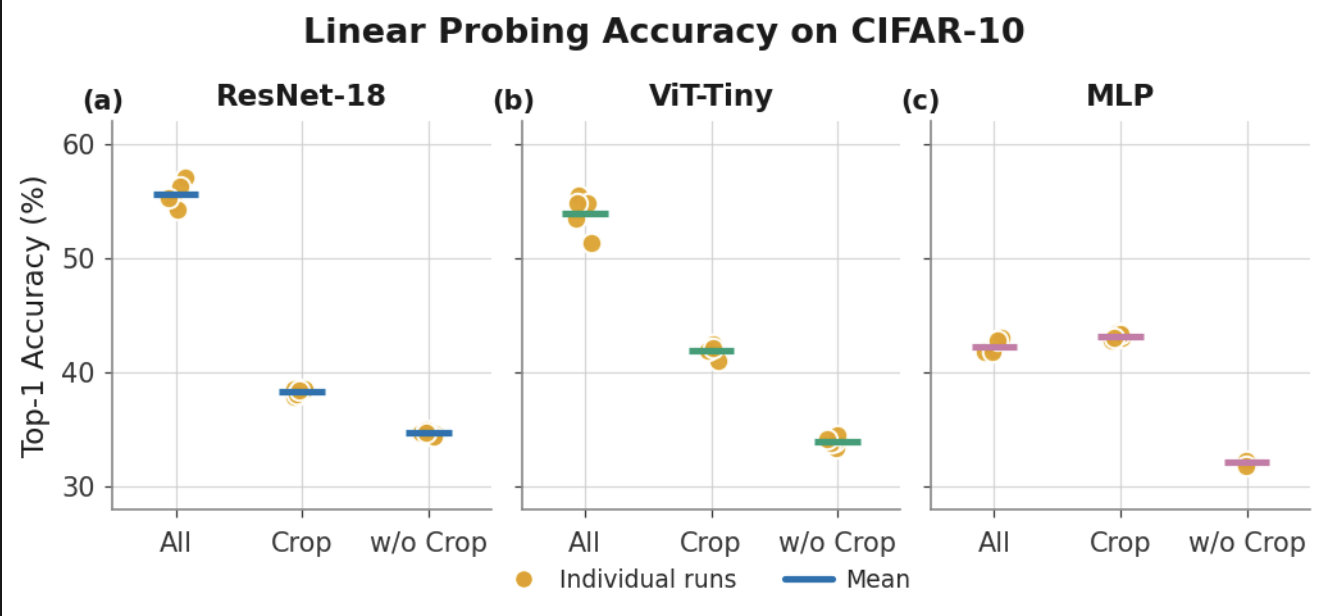}
    \caption{Linear probe accuracy on CIFAR-10 by architecture and augmentation regime. Individual runs shown as points; bars indicate mean $\pm 1$ std. The ``All'' regime best approximates the diversity condition and yields highest accuracy across all architectures.}
    \label{fig:linear_probe_accuracy}
\end{figure}

\begin{figure}[t]
    \centering
    \begin{minipage}[b]{0.45\columnwidth}
        \centering
        \includegraphics[width=\textwidth]{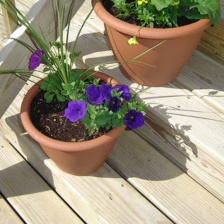}
        (a) Original
    \end{minipage}
    \hfill
    \begin{minipage}[b]{0.45\columnwidth}
        \centering
        \includegraphics[width=\textwidth]{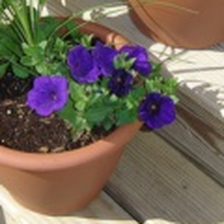}
        (b) Crop
    \end{minipage}

    \vspace{0.5em}

    \begin{minipage}[b]{0.45\columnwidth}
        \centering
        \includegraphics[width=\textwidth]{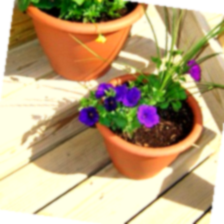}
        (c) All w/o crop
    \end{minipage}
    \hfill
    \begin{minipage}[b]{0.45\columnwidth}
        \centering
        \includegraphics[width=\textwidth]{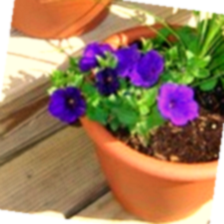}
        (d) All
    \end{minipage}
    \caption{CIFAR-10 augmentation regimes. (a) Original image. (b) Crop Only: random resized crop altering spatial extent. (c) All without crop: color jitter, horizontal flip, rotation, and blur. (d) All augmentations combined. Cropping changes visible spatial extent and local statistics, while color and blur transformations mainly alter appearance in this example.}
    \label{fig:augmentation_examples}
\end{figure}

\section{Conclusion}
\label{conclusion}

We have presented a theoretical framework for understanding when contrastive learning recovers meaningful latent representations. Our central contribution is the \textit{diversity condition} (Definition~\ref{def:diversity}), a requirement on $P_{\tilde{Z}|z}$ that is necessary for isometric latent recovery. When it holds, sufficiently expressive encoders recover the latent space up to an orthogonal transformation; when it is violated, the contrastive objective can actively disincentivize geometry-preserving solutions. The adapted InfoNCE objective makes isometric solutions achievable under violated diversity, but does not guarantee their selection. Our synthetic and CIFAR-10 experiments show that sampling diversity and architectural inductive bias jointly determine representation quality.

\paragraph{Limitations and future work.}
Motivated by practical $L^2$ normalization, our analysis uses a spherical latent space and vMF conditionals; future work should relax these assumptions and estimate diversity violation from data. Adapted InfoNCE is a theoretical diagnostic rather than a scalable objective: anchor-specific negatives require $O(N(M+1))$ memory, and support correction restores achievability but not selection.

\section*{Impact Statement}

This work advances theoretical understanding of contrastive learning, a foundational technique for self-supervised representation learning. Our contributions are primarily theoretical, providing formal conditions (the diversity condition) under which contrastive methods succeed or fail at recovering meaningful latent structure.

The practical implications are indirect but potentially significant. By clarifying the interplay between sampling mechanisms and architectural inductive bias, our framework may guide more principled design of augmentation pipelines and encoder architectures, potentially reducing computational waste from trial-and-error experimentation. This could lower the environmental cost of training large-scale representation learning systems.

Contrastive learning underlies many deployed systems in vision, language, and multimodal domains. Improved theoretical understanding may help practitioners anticipate failure modes before deployment, particularly in high-stakes applications such as medical imaging or scientific discovery where representation quality directly affects downstream reliability.

We do not foresee direct negative societal consequences from this theoretical work. However, as with any advance in representation learning, improved methods could enhance both beneficial applications (e.g., drug discovery, climate modeling) and potentially harmful ones (e.g., surveillance). We encourage practitioners to consider the ethical implications of specific downstream applications enabled by better representation learning.

\bibliography{main}
\bibliographystyle{icml2026}

\newpage
\appendix

\onecolumn

\section{Theoretical Results: Proofs}
\label{app:theoretical-proofs}

This appendix provides complete proofs for the theoretical results presented in Section~\ref{sec:theoretical-results}. We organize the material into three subsections corresponding to the three scenarios analyzed: full diversity (Section~\ref{app:proof-full-diversity}), violated diversity (Section~\ref{app:proof-violated-diversity}), and the corrected model (Section~\ref{app:proof-corrected}).

\subsection{Proofs for Reconstruction Under Full Diversity}
\label{app:proof-full-diversity}

We first establish that cross-entropy minimizers preserve inner products almost surely, which forms the foundation for proving linear identifiability.

\begin{theorem}[Cross-Entropy Minimizers Preserve Inner Products]
\label{thm:cross-entropy-min-isometry-app}
Let $\mathcal{Z} = \mathbb{S}^{k-1}$, $\kappa > 0$, and let $P_{\tilde{Z}|z}$ have density
\[
    p(\tilde{z} | z) = C_{p}^{-1} \exp(\kappa \tilde{z}^{\top} z),
\]
where $C_p$ is the normalizing constant with respect to $\sigma$. Let $Q_{h,z}$ have density $q_h(\cdot|z)$ with respect to $\sigma$. Let $h: \mathcal{Z} \to \mathcal{Z}$ be a recovery map. If $h$ minimizes the cross-entropy
\[
    \mathcal{L}_{h} := \mathbb{E}_{z\sim P_Z}[H(P_{\tilde{Z}|z}, Q_{h,z})],
\]
then $p(\tilde{z} | z) = q_{h}(\tilde{z} | z)$ $P_{\tilde{Z}|z}$-a.s., and $z^{\top} \tilde{z} = h(z)^{\top} h(\tilde{z})$ $P_{Z,\tilde{Z}}$-a.s.
\end{theorem}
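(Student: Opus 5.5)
The argument rests on Gibbs' inequality. For each anchor $z$, decompose the cross-entropy as
\[
H(P_{\tilde{Z}|z}, Q_{h,z}) = H(P_{\tilde{Z}|z}) + \mathrm{KL}(P_{\tilde{Z}|z}\,\|\,Q_{h,z}),
\]
with both densities taken with respect to $\sigma$. The entropy term $H(P_{\tilde{Z}|z})$ does not depend on $h$. By rotational invariance of the vMF law it is also the same constant for every $z$. Hence $\mathcal{L}_h \ge \mathbb{E}_{z}[H(P_{\tilde{Z}|z})]$, and equality holds iff the KL term vanishes for $P_Z$-a.e.\ $z$.

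The first step is to show that this lower bound is actually attained, so that a minimizer must achieve it. I use the identity map $h=\mathrm{id}$ with the matching $\kappa = 1/\tau$. The model family in Theorem~\ref{thm:cl-cross-entropy} ties the concentration of $q_h$ to $1/\tau$, and I will make this matching explicit as a standing assumption; if the statement is read with $q_h$ as an arbitrary density, attainability is immediate. With this choice, $q_{\mathrm{id}}(\cdot|z) = p(\cdot|z)$ exactly, so the infimum equals the entropy bound. Consequently any minimizer $h$ satisfies $\mathrm{KL}(P_{\tilde{Z}|z}\|Q_{h,z})=0$ for a.e.\ $z$. Therefore $p(\cdot|z) = q_h(\cdot|z)$ holds $\sigma$-a.e., and in particular $P_{\tilde{Z}|z}$-a.s. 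This is the first claim.

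Next I convert density equality into inner-product equality. Take logarithms of $p(\tilde z|z)=q_h(\tilde z|z)$:
\[
\kappa\, z^\top\tilde z - \log C_p = h(z)^\top h(\tilde z)/\tau - \log C_h(z),
\]
where $C_h(z) = \int e^{h(z')^\top h(z)/\tau}\,d\sigma(z')$. With $\kappa=1/\tau$, this says that $z^\top\tilde z - h(z)^\top h(\tilde z) = c(z)$ for $P_{Z,\tilde Z}$-a.e.\ pair. Here $c(z)$ is a function of the anchor alone. The joint law $P_{Z,\tilde Z}$ is equivalent to $\sigma\otimes\sigma$, because the vMF density is strictly positive, as in Lemma~\ref{lem:vmf-implies-diversity}. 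So the identity holds $\sigma\otimes\sigma$-a.e.

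The main obstacle is showing that $c\equiv 0$. I plan to use symmetry together with the diagonal. The left-hand side is symmetric in $(z,\tilde z)$, so $c(z)=c(\tilde z)$ for a.e.\ pair. By Fubini, $c$ is then a.e.\ equal to a constant $c_0$. To pin down $c_0$, I have two routes.

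The first route uses the normalizers. Equality of the two probability densities forces $C_h(z)=C_p\, e^{-c_0}$ after rearranging. Since $C_p = \int e^{\kappa z'^\top z}\,d\sigma(z')$, this becomes $\int e^{h(z')^\top h(z)/\tau}d\sigma(z') = \int e^{z'^\top z/\tau} e^{-c_0}\,d\sigma(z')$. By the identity, $h(z')^\top h(z) = z'^\top z - c_0$ a.e., so the left side equals $e^{-c_0/\tau}C_p$. Comparing the two sides gives $e^{-c_0/\tau}=e^{-c_0}$, and this pins down $c_0=0$ only when $\tau\ne1$, so this route is not decisive on its own.

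The second, more robust route uses the unit norms. Fix a generic $z$. For a.e.\ $\tilde z$, the identity gives $h(z)^\top h(\tilde z)= z^\top\tilde z - c_0$. Choose a sequence $\tilde z_n\to z$ of good points, and apply the same identity at the pairs $(\tilde z_n,\tilde z)$ in the full-measure sense. This shows that $w\mapsto h(w)^\top h(\tilde z)$ agrees a.e.\ with a continuous function. Averaging over a small ball around $z$ then yields $\|h(z)\|^2=1=1-c_0$. Concretely, I intend to integrate $h(z)^\top h(\tilde z)$ against $\tilde z$ over a shrinking neighborhood of $z$ and use Lebesgue differentiation; both sides have norm at most $1$, and this forces $c_0=0$. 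Alternatively, the pair $(z,-z)$ gives $h(z)^\top h(-z) = -1-c_0 \ge -1$, so $c_0\le0$; the near-diagonal pairs give $h(z)^\top h(\tilde z)\approx 1-c_0\le 1$, so $c_0\ge0$. Both steps hold only a.e., which is handled by Fubini. Together these force $c_0=0$, hence $z^\top\tilde z = h(z)^\top h(\tilde z)$ $P_{Z,\tilde Z}$-a.s.
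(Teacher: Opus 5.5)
Your argument is correct once $\kappa\tau=1$ is a hypothesis, and its core is the paper's: match densities, take logarithms, then squeeze the affine relation using $|h(z)^\top h(\tilde z)|\le 1$ and the fact that $\tilde z^\top z$ has positive mass arbitrarily near $\pm1$. The difference is in the logic, and there your version is the sound one.

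The paper asserts that a minimizer satisfies $p=q_h$ without checking that the entropy bound is attained inside the family $\{q_h\}$. It then claims to \emph{derive} $\kappa\tau=1$ and $D(z)=C_p$ from $-1\le a(z)+b\,\tilde z^\top z\le 1$, where $a(z)=\tau\log(D(z)/C_p)$ and $b=\kappa\tau$. That constraint only gives $|a(z)|\le 1-b$; for example $b=\tfrac12$, $a=0$ satisfies it. So ``the bounds must equal the extremes'' does not follow. Your Gibbs decomposition with the identity map supplies attainability. Fixing $b=1$ then turns $|a(z)|\le 1-b$ into $a(z)=0$, anchor by anchor.

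The hypothesis is genuinely needed, not a convenience. For $\kappa\tau>1$ the same inequality is already contradictory, so no $h$ achieves $p=q_h$. For $\kappa\tau<1$, comparing the ranks of the integral operators with kernels $h(z)^\top h(\tilde z)$ and $a_0+b\,z^\top\tilde z$ rules it out.

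Some tidying:
\begin{itemize}
\item Because the squeeze works per anchor once $b=1$, your symmetry/Fubini step making $c$ constant is correct but unnecessary.
\item Delete your first route through the normalizers. You dropped a factor $1/\tau$: from $c(z)=\tau\log(C_p/C_h(z))$ one gets $C_h(z)=C_p e^{-c_0/\tau}$, not $C_p e^{-c_0}$. The comparison then reads $e^{-c_0/\tau}=e^{-c_0/\tau}$, which is vacuous for every $\tau$, not only for $\tau=1$.
\item In the antipodal bound, the pair $(z,-z)$ is $\sigma\otimes\sigma$-null, so Fubini does not rescue it. Argue instead on the caps $\{\tilde z:\tilde z^\top z<-1+\epsilon\}$ and $\{\tilde z:\tilde z^\top z>1-\epsilon\}$, which have positive measure; this is exactly the paper's cap integral.
\item Your Lebesgue-differentiation variant is a valid alternative that needs only the diagonal. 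Take a good anchor $z$ that is also a Lebesgue point of the bounded map $h$. Averaging $h(z)^\top h(\tilde z)=z^\top\tilde z-c(z)$ over shrinking balls around $z$ gives $\|h(z)\|^2=1-c(z)$, hence $c(z)=0$ directly.
\end{itemize}
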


\begin{proof}
Since $h$ minimizes $\mathcal{L}_{h}$, it minimizes $H(P_{\tilde{Z}|z}, Q_{h,z})$ almost surely with respect to $P_Z$. The cross-entropy $H$ is minimized when
\[
    p(\cdot | z) = q_{h}(\cdot | z) \quad P_{\tilde{Z}|z}\text{-a.s.}
\]

For a fixed $z$, this implies:
\begin{equation}
    \frac{e^{\kappa \tilde{z}^{\top} z}}{C_p} = \frac{e^{h(\tilde{z})^{\top} h(z) / \tau}}{D(z)} \quad P_{\tilde{Z}|z}\text{-a.s.}
\end{equation}
where $D(z)$ and $C_p$ are the normalizing constants for the respective von Mises-Fisher distributions.

Taking the logarithm yields:
\begin{equation}
    \log \left(\frac{D(z)}{C_p}\right) \tau + \tau \kappa \tilde{z}^{\top} z = h(\tilde{z})^{\top} h(z) \quad P_{\tilde{Z}|z}\text{-a.s.}
\end{equation}

Since $h$ maps onto the unit sphere,
\begin{equation}
    -1 \leq h(\tilde{z})^{\top} h(z) \leq 1 \quad P_{\tilde{Z}|z}\text{-a.s.}
\end{equation}

Rearranging gives:
\begin{equation}
    -1 \leq \log \left(\frac{D(z)}{C_p}\right) \tau + \tau \kappa \tilde{z}^{\top} z \leq 1 \quad P_{\tilde{Z}|z}\text{-a.s.}
\end{equation}

This simplifies to:
\begin{equation}
    \frac{-1 - \log \left(\frac{D(z)}{C_p}\right) \tau}{\tau \kappa} \leq \tilde{z}^{\top} z \leq \frac{1 - \log \left(\frac{D(z)}{C_p}\right) \tau}{\tau \kappa}
\end{equation}

Since $P_{\tilde{Z}|z}$ is a von Mises-Fisher distribution, the support of $\tilde{z}^{\top} z$ under this measure includes regions arbitrarily close to both 1 and $-1$. Specifically, for any $c \in (-1, 1)$:
\begin{equation}
    \mu\left(\{\tilde{z} \in \mathcal{Z}: \tilde{z}^{\top} z \geq c\}\right) = \beta \int_{0}^{\cos^{-1}(c)} e^{\kappa \cos \theta} \sin^{k-2}\theta \, d\theta > 0
\end{equation}
where $\beta > 0$ is the spherical-measure normalization constant.

To satisfy the inequality almost surely with respect to $P_{\tilde{Z}|z}$, the bounds must equal the extremes:
\begin{equation}
    \frac{1 - \log \left(\frac{D(z)}{C_p}\right) \tau}{\tau \kappa} = 1 \quad \text{and} \quad \frac{-1 - \log \left(\frac{D(z)}{C_p}\right) \tau}{\tau \kappa} = -1
\end{equation}

Both equations yield:
\begin{equation}
    \log \left(\frac{D(z)}{C_p}\right) = 0 \implies D(z) = C_p
\end{equation}
and
\begin{equation}
    \label{eq:kappa-tau-relation}
    \kappa \tau = 1
\end{equation}

Substituting back:
\begin{equation}
    \tilde{z}^{\top} z = h(\tilde{z})^{\top} h(z) \quad P_{\tilde{Z}|z}\text{-a.s.}
\end{equation}

Since $h$ minimizes $\mathcal{L}_{h}$, this holds for almost all $z$ with respect to $P_Z$, implying:
\begin{equation}
    \tilde{z}^{\top} z = h(\tilde{z})^{\top} h(z) \quad P_{Z,\tilde{Z}}\text{-a.s.}
\end{equation}
\end{proof}

The following corollary translates inner product preservation to distance preservation, which is the key geometric property needed for identifiability.

\begin{corollary}[Cross-Entropy Minimizers Preserve Distances Almost Everywhere]
\label{cor:isometry-ae-app}
If $h$ minimizes the expected cross-entropy loss $\mathbb{E}_{z \sim P_Z}[H(P_{\tilde{Z}|z},Q_{h,z})]$, then
\[
\|h(z)-h(\tilde{z})\|^{2}=\|z-\tilde{z}\|^{2} \quad P_{Z,\tilde{Z}}\text{-a.s.}
\]
\end{corollary}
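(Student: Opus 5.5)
The corollary follows directly from Theorem~\ref{thm:cross-entropy-min-isometry-app}, once we use that all points involved lie on the unit sphere. I would first invoke that theorem. Since $h$ minimizes $\mathbb{E}_{z\sim P_Z}[H(P_{\tilde{Z}|z},Q_{h,z})]$, it gives a measurable set $N\subseteq \mathcal{Z}\times\mathcal{Z}$ with $P_{Z,\tilde{Z}}(N)=0$ such that
\[
h(z)^{\top}h(\tilde{z}) = z^{\top}\tilde{z} \quad \text{for all } (z,\tilde{z})\notin N .
\]
I would carry this same exceptional set $N$ through the rest of the argument, so that no further null sets are needed.

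Next, I would expand both squared norms using the polarization-type identity $\|a-b\|^2=\|a\|^2+\|b\|^2-2a^{\top}b$. By (A1), $z,\tilde{z}\in\mathbb{S}^{k-1}$, so $\|z\|=\|\tilde{z}\|=1$. By (A5), $h$ maps into $\mathbb{S}^{k-1}$, so $\|h(z)\|=\|h(\tilde{z})\|=1$ for every point, not merely almost every point. This yields the pointwise identities
\[
\|z-\tilde{z}\|^2 = 2-2\,z^{\top}\tilde{z},\qquad \|h(z)-h(\tilde{z})\|^2 = 2-2\,h(z)^{\top}h(\tilde{z}).
\]
On the complement of $N$, substituting the inner-product equality into the second identity gives $\|h(z)-h(\tilde{z})\|^2=\|z-\tilde{z}\|^2$. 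This is the $P_{Z,\tilde{Z}}$-a.s. statement.

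There is no substantive obstacle here. The only point needing care is that the unit-norm constraint on $h(z)$ must hold for all points and not just outside a null set; (A5) guarantees exactly this. The identity is therefore exact off $N$.

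I would also add a remark preparing Theorem~\ref{thm:linear-identifiability-full-diversity}, which is presumably where this corollary is used. Under full-support vMF, Lemma~\ref{lem:vmf-implies-diversity} gives that $P_{Z,\tilde{Z}}$ has a strictly positive density with respect to $\sigma\otimes\sigma$. Hence $P_{Z,\tilde{Z}}$ and $\sigma\otimes\sigma$ are mutually absolutely continuous, and the distance preservation also holds $(\sigma\otimes\sigma)$-a.e. That is the form needed to apply the probabilistic Mazur--Ulam theorem.
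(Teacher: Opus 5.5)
Your proposal is correct and is essentially the paper's proof: you invoke Theorem~\ref{thm:cross-entropy-min-isometry-app} to get $h(z)^\top h(\tilde{z}) = z^\top\tilde{z}$ almost surely, then use $\|a-b\|^2 = 2-2a^\top b$ for unit vectors. Your closing remark, that the identity also holds $(\sigma\otimes\sigma)$-a.e., is correct and sharpens the paper's later appeal to Mazur--Ulam, though the corollary does not need it; note that it follows from the joint density $p_Z(z)\,p(\tilde{z}|z)$ being strictly positive, not from the diversity condition as stated in Lemma~\ref{lem:vmf-implies-diversity}.
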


\begin{proof}
Since both $z, \tilde{z} \in \mathcal{Z} = \mathbb{S}^{k-1}$ and $h(z), h(\tilde{z}) \in \mathcal{Z}' = \mathbb{S}^{k-1}$, we have $\|z\|^2 = \|\tilde{z}\|^2 = \|h(z)\|^2 = \|h(\tilde{z})\|^2 = 1$.

Expanding the squared Euclidean distance:
\begin{align}
    \|z - \tilde{z}\|^2 &= \|z\|^2 - 2z^\top\tilde{z} + \|\tilde{z}\|^2 = 2 - 2z^\top\tilde{z}
\end{align}

Similarly, for the mapped points:
\begin{align}
    \|h(z) - h(\tilde{z})\|^2 &= \|h(z)\|^2 - 2h(z)^\top h(\tilde{z}) + \|h(\tilde{z})\|^2 = 2 - 2h(z)^\top h(\tilde{z})
\end{align}

By Theorem~\ref{thm:cross-entropy-min-isometry-app}, we have $z^\top\tilde{z} = h(z)^\top h(\tilde{z})$ $P_{Z,\tilde{Z}}$-a.s. Therefore:
\begin{equation}
    \|h(z) - h(\tilde{z})\|^2 = 2 - 2h(z)^\top h(\tilde{z}) = 2 - 2z^\top\tilde{z} = \|z - \tilde{z}\|^2
\end{equation}
holds $P_{Z,\tilde{Z}}$-almost surely.
\end{proof}

The distance preservation property established above holds only almost everywhere with respect to the joint distribution. To conclude that the optimal recovery map is a global orthogonal transformation, we apply the probabilistic generalization of the Mazur-Ulam theorem~\cite{zaliaduonis2026probabilisticgeneralizationmazurulamtheorem}, which shows that isometries holding almost everywhere on probability spaces can be extended to global isometries on the entire space.

\begin{proof}[Proof of Theorem~\ref{thm:linear-identifiability-full-diversity}]
By Corollary~\ref{cor:isometry-ae-app}, the optimal recovery map $h: \mathcal{Z} \to \mathcal{Z}$ preserves distances almost everywhere with respect to the joint distribution $P_{Z,\tilde{Z}}$. Specifically, there exists a set $N \subset \mathcal{Z} \times \mathcal{Z}$ with $P_{Z,\tilde{Z}}(N) = 0$ such that
\[
\|h(z) - h(\tilde{z})\| = \|z - \tilde{z}\| \quad \text{for all } (z, \tilde{z}) \in (\mathcal{Z} \times \mathcal{Z}) \setminus N.
\]

Since the marginal law $P_Z$ is uniform on the sphere and has full support, $h$ preserves pairwise distances on a set of full $P_Z$-measure.

Applying the probabilistic Mazur-Ulam theorem~\cite{zaliaduonis2026probabilisticgeneralizationmazurulamtheorem}, which extends almost-everywhere isometries on probability spaces to global isometries, there exists $H: \mathbb{R}^k \to \mathbb{R}^k$ of the form $H(x) = Ax + b$ with $A \in O(k)$ and $b \in \mathbb{R}^k$, such that $h(z) = H(z)$ for $P_Z$-almost all $z \in \mathcal{Z}$.

Since both $\mathcal{Z}$ and $\mathcal{Z}'$ are unit spheres centered at the origin, and $H$ agrees almost everywhere with $h: \mathcal{Z} \to \mathcal{Z}'$, the translation component must vanish ($b = 0$) and the linear part must preserve the unit sphere. This implies that $A$ is an orthogonal matrix, i.e., $A^\top A = I$.

Therefore, the optimal recovery map $h$ coincides with an orthogonal transformation almost everywhere:
\[
h(z) = Az \quad \text{for } \mu\text{-almost all } z \in \mathcal{Z},
\]
where $A \in O(k)$ is an orthogonal matrix.
\end{proof}

\subsection{Proofs for Reconstruction Under Violated Diversity}
\label{app:proof-violated-diversity}

When the diversity condition is violated, the conditional law is constrained to a lower-dimensional submanifold. We decompose each latent vector $z \in \mathcal{Z} = \mathbb{S}^{k-1}$ into two components:
\[
z = (u, v)^\top \in \mathbb{R}^m \times \mathbb{R}^{\ell}, \quad \text{where } m + \ell = k \text{ and } m,\ell>0.
\]
Here, $u \in \mathbb{R}^m$ represents latent dimensions that remain fixed under conditional sampling, while $v \in \mathbb{R}^{\ell}$ represents dimensions that can vary. Let $K(z)=\{\tilde{z}\in\mathbb{S}^{k-1}:\tilde{u}=u\}\cong\mathbb{S}^{\ell-1}_{r(z)}$, where $r(z)=\sqrt{1-\|u\|^2}$. Let $\sigma_{K(z)}$ denote intrinsic surface measure on $K(z)$. The constrained conditional is the probability measure $P^K_{\tilde{Z}|z}$ with density
\begin{equation}
\frac{dP^K_{\tilde{Z}|z}}{d\sigma_{K(z)}}(\tilde{z})
=
\frac{e^{\kappa \tilde{z}^\top z}}
{\int_{K(z)} e^{\kappa z^\top z'} d\sigma_{K(z)}(z')}.
\end{equation}
It is singular with respect to ambient spherical measure on $\mathbb{S}^{k-1}$, but absolutely continuous with respect to the intrinsic measure on $K(z)$.

Despite this modification, the asymptotic relationship between the contrastive loss and cross-entropy minimization remains valid.

\begin{theorem}[Asymptotic Equivalence Under Violated Diversity]
\label{thm:asymptotic-violated-app}
Given the constrained conditional probability measure defined above, the marginal law $P_Z$ uniform on $\mathcal{Z} = \mathbb{S}^{k-1}$, temperature $\tau > 0$, and number of negative samples $M > 0$, as $M \to \infty$:
\[
\lim_{M \to \infty} \mathcal{L}_{\text{CL}}(h; \tau, M) - \log M + \log |\mathcal{Z}| = \mathbb{E}_{z \sim P_Z}[H(P^K_{\tilde{Z}|z}, Q_{h,z})]
\]
\end{theorem}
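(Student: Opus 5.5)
We follow the route of Theorem~\ref{thm:cl-cross-entropy}, i.e.\ the argument of~\cite{zimmermann2022contrastivelearninginvertsdata}. That argument never uses the form of the positive conditional beyond integrability; it only needs the negatives to be drawn from the uniform full-sphere marginal. First, write the per-sample loss as
\[
-\tfrac{1}{\tau}h(z)^\top h(\tilde z) + \log\Bigl(e^{h(z)^\top h(\tilde z)/\tau} + \textstyle\sum_{i=1}^M e^{h(z)^\top h(z_i^-)/\tau}\Bigr).
\]
Then subtract $\log M$ from the second term, which turns it into the log of an empirical mean plus a vanishing term $M^{-1}e^{h(z)^\top h(\tilde z)/\tau}$.

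Since $h$ maps into $\mathbb{S}^{k-1}$, every exponent lies in $[-1/\tau,1/\tau]$, so all summands lie in $[e^{-1/\tau},e^{1/\tau}]$. For fixed $z$, the strong law of large numbers applied to the i.i.d.\ negatives $z_i^-\sim U(\mathbb{S}^{k-1})$ gives
\[
\frac1M\sum_{i=1}^M e^{h(z)^\top h(z_i^-)/\tau}\to \frac{1}{|\mathcal Z|}\int_{\mathcal Z} e^{h(z)^\top h(z')/\tau}\,d\sigma(z') \quad\text{a.s.}
\]
The vanishing term is at most $e^{1/\tau}/M$. The log-argument stays in the compact interval $[e^{-1/\tau},2e^{1/\tau}]$, where $\log$ is bounded and continuous. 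Bounded convergence therefore lets us pass the limit through the expectation over $(z,\tilde z)\sim P_{\mathrm{pos}}$ and the negatives. This yields
\[
\lim_{M\to\infty}\mathcal L_{\mathrm{CL}}-\log M+\log|\mathcal Z| = \mathbb E_{(z,\tilde z)}\Bigl[-\tfrac1\tau h(z)^\top h(\tilde z)+\log\!\int_{\mathcal Z} e^{h(z)^\top h(z')/\tau}d\sigma(z')\Bigr].
\]

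Next, identify this with cross-entropy. Here $P_{\mathrm{pos}}$ has $z\sim P_Z$ and $\tilde z\sim P^K_{\tilde Z|z}$. The integrand equals $-\log q_h(\tilde z\mid z)$, with $q_h$ the $\sigma$-density of Equation~\ref{eq:dist-model}. Conditioning on $z$ gives $\mathbb E_{z\sim P_Z}\bigl[-\int\log q_h(\tilde z\mid z)\,dP^K_{\tilde Z|z}(\tilde z)\bigr]$, which we take as the definition of $H(P^K_{\tilde Z|z},Q_{h,z})$.

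The main obstacle is this last identification, not the limit. $P^K_{\tilde Z|z}$ is singular with respect to $\sigma$, so the cross-entropy cannot be written as a density-versus-density integral in the usual way. It must be read as $\mathbb E_{P^K}[-\log q_h]$, evaluated pointwise on $K(z)$. I would state this convention explicitly. I would also check that $q_h(\cdot\mid z)$ is defined everywhere (not just $\sigma$-a.e.), since it is an explicit continuous-in-$h$ formula. This makes its restriction to the $\sigma$-null set $K(z)$ meaningful. The other points are routine: measurability of $h$, which makes the integrands measurable, and the fact that the constant $\log|\mathcal Z|$ comes from normalizing the uniform negative density.
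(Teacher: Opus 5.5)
Your proposal is correct and follows the paper's route: the paper just defers to the cited law-of-large-numbers argument (Theorem~\ref{thm:cl-cross-entropy}), noting that it does not depend on the form of the positive conditional, and you carry out that argument explicitly with the boundedness and bounded-convergence details. Your remark that $P^K_{\tilde Z|z}$ is singular with respect to $\sigma$ is something the paper leaves implicit: it means the cross-entropy must be read as $\mathbb{E}_{P^K_{\tilde Z|z}}[-\log q_h(\cdot\mid z)]$ with the pointwise-defined $q_h$, and this is exactly the alignment-plus-uniformity form the paper uses in the proof of Theorem~\ref{thm:no-identifiability-violated}.
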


\begin{proof}
The proof follows identically to Theorem~\ref{thm:cl-cross-entropy}, as the asymptotic analysis relies only on the law of large numbers and properties of the logarithm, not on the specific structure of the ground-truth conditional law. See~\cite{wang2022understandingcontrastiverepresentationlearning} for the full argument.
\end{proof}

The critical distinction arises when comparing global values of the limiting objective. Under violated diversity, orthogonal recovery maps are not global minimizers.

\begin{proof}[Proof of Theorem~\ref{thm:no-identifiability-violated}]
Let $\sigma$ denote the normalized spherical measure on $\mathcal{Z}=\mathbb{S}^{k-1}$. By Theorem~\ref{thm:asymptotic-violated-app}, it suffices to compare the asymptotic contrastive loss up to constants independent of $h$:
\[
\mathcal{L}(h)
= -\frac{1}{\tau}\mathbb{E}_{(z,\tilde{z}) \sim P_{\mathrm{pos}}}
\!\left[h(z)^\top h(\tilde{z})\right]
+ \mathbb{E}_{z \sim \sigma}
\log \int_{\mathbb{S}^{k-1}}
\exp\!\left(\frac{h(z)^\top h(z')}{\tau}\right)d\sigma(z').
\]
For $\lambda>0$, define
\[
h_\lambda(u,v)
:=
\frac{(u,\lambda v)}{\sqrt{\|u\|^2+\lambda^2\|v\|^2}}.
\]
At $\lambda=1$, $h_\lambda$ is the identity. For $\lambda\ne 1$, $h_\lambda$ is not orthogonal because it changes inner products between points with different relative invariant and variant components.

Write the alignment and uniformity terms as
\begin{align*}
A(\lambda)
&:= \mathbb{E}_{(z,\tilde{z}) \sim P_{\mathrm{pos}}}
\!\left[h_\lambda(z)^\top h_\lambda(\tilde{z})\right],\\
U(\lambda)
&:= \mathbb{E}_{z \sim \sigma}
\log \int_{\mathbb{S}^{k-1}}
\exp\!\left(\frac{h_\lambda(z)^\top h_\lambda(z')}{\tau}\right)d\sigma(z').
\end{align*}
Then $\mathcal{L}(h_\lambda)=-\tau^{-1}A(\lambda)+U(\lambda)$.

We first show that shrinking the variant component improves alignment to first order. Under the constrained positive-pair distribution, $\tilde{u}=u$. Since $z,\tilde{z}\in\mathbb{S}^{k-1}$, this implies $\|v\|=\|\tilde{v}\|$. Therefore
\[
h_\lambda(z)^\top h_\lambda(\tilde{z})
=
\frac{\|u\|^2+\lambda^2 v^\top \tilde{v}}
{\|u\|^2+\lambda^2\|v\|^2}.
\]
With $t=\lambda^2$ and
\[
f(t)=\frac{\|u\|^2+t\,v^\top\tilde{v}}{\|u\|^2+t\|v\|^2},
\]
we have
\[
f'(t)=
\frac{\|u\|^2(v^\top\tilde{v}-\|v\|^2)}
{(\|u\|^2+t\|v\|^2)^2}.
\]
By Cauchy-Schwarz, $v^\top\tilde{v}\le \|v\|^2$, with strict inequality whenever $v\ne\tilde{v}$. The constrained vMF conditional is non-degenerate for $\kappa>0$, so $v\ne\tilde{v}$ on a set of positive $P_{\mathrm{pos}}$-measure, and $\|u\|>0$ for $\sigma$-almost every $z$ when $m>0$. Hence $A'(1)<0$, and for some $c>0$,
\[
A(\lambda)-A(1)=c(1-\lambda)+o(1-\lambda)
\qquad \text{as } \lambda\uparrow 1.
\]

We next show that the uniformity term has zero first derivative at the identity. Let
\[
g(z):=\left.\frac{d}{d\lambda}h_\lambda(z)\right|_{\lambda=1}
=(-\|v\|^2u,\,\|u\|^2v),
\]
so $z^\top g(z)=0$. At $\lambda=1$, the inner integral
\[
C:=\int_{\mathbb{S}^{k-1}}\exp(z^\top z'/\tau)d\sigma(z')
\]
is independent of $z$ by rotational symmetry. Differentiating under the integral, which is justified by smoothness and compactness of the sphere,
\[
U'(1)
=\frac{1}{\tau C}
\int\!\!\int
\exp(z^\top z'/\tau)
\left(g(z)^\top z' + z^\top g(z')\right)
d\sigma(z')d\sigma(z).
\]
For any fixed $a\in\mathbb{S}^{k-1}$, rotational symmetry gives
\[
\int_{\mathbb{S}^{k-1}} z\exp(a^\top z/\tau)d\sigma(z) = \alpha a
\]
for some scalar $\alpha$. Applying this identity to each of the two terms above and using $z^\top g(z)=0$ yields $U'(1)=0$. Thus
\[
U(\lambda)-U(1)=o(1-\lambda)
\qquad \text{as } \lambda\uparrow 1.
\]

Combining the two expansions,
\[
\mathcal{L}(h_\lambda)-\mathcal{L}(I)
=-\frac{1}{\tau}\bigl(A(\lambda)-A(1)\bigr)
+\bigl(U(\lambda)-U(1)\bigr)
=-\frac{c}{\tau}(1-\lambda)+o(1-\lambda)<0
\]
for all $\lambda<1$ sufficiently close to $1$.

Finally, every orthogonal map $\tilde{h}\in O(k)$ preserves inner products and the uniform spherical measure, so $\mathcal{L}(\tilde{h})=\mathcal{L}(I)$. Taking $h=h_\lambda$ for any $\lambda<1$ sufficiently close to $1$ gives
\[
\mathcal{L}_{\mathrm{CL}}(h)<\mathcal{L}_{\mathrm{CL}}(\tilde{h}),
\qquad \forall\,\tilde{h}\in O(k),
\]
in the asymptotic regime $M\to\infty$.
\end{proof}

\subsection{Proofs for the Corrected Model}
\label{app:proof-corrected}

To address the support mismatch, we modify the model conditional to incorporate the same constraint structure:
\begin{equation}
    q^K_{h,z}(\tilde{z})
    =
    \frac{e^{h(z)^{\top}h(\tilde{z})/\tau}}
    {\int_{K(z)}e^{h(z)^{\top}h(z')/\tau}d\sigma_{K(z)}(z')},
    \qquad \tilde{z}\in K(z).
\end{equation}

This modification ensures that $\text{supp}(Q^K_{h,z}) = \text{supp}(P^K_{\tilde{Z}|z})$, eliminating the support mismatch.

\begin{theorem}[Asymptotic Form of Modified Contrastive Loss]
\label{thm:adjusted-cross-entropy-app}
Under the corrected model conditional, where negative samples are drawn uniformly from the constrained manifold $K(z)$ rather than from the full sphere $\mathcal{Z}$, the asymptotic contrastive loss takes the form:
\[
\lim_{M \to \infty}\mathcal{L}(h,\tau,M) - \log(M) = \mathbb{E}_{z \sim P_Z}[H(P^K_{\tilde{Z}|z},Q^K_{h,z})] - \mathbb{E}_{z \sim P_Z}[\log(|K(z)|)]
\]
\end{theorem}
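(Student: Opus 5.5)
We follow the standard asymptotic argument behind Theorem~\ref{thm:cl-cross-entropy} (as in Theorem~\ref{thm:asymptotic-violated-app}), with negatives drawn from $U(K(z))$ instead of $U(\mathcal{Z})$. For a fixed anchor $z$ with $r(z)>0$ (which holds $P_Z$-a.s.), write the per-anchor adapted loss from Equation~\ref{eq:adapted-infonce-per-anchor} as
\[
-\frac{h(z)^\top h(\tilde z)}{\tau} + \log\Bigl(e^{h(z)^\top h(\tilde z)/\tau} + \sum_{j=1}^M e^{h(z)^\top h(z_j^-)/\tau}\Bigr).
\]
Subtract $\log M$ and rewrite the second term as $\log\bigl(\tfrac1M e^{h(z)^\top h(\tilde z)/\tau} + \tfrac1M\sum_j e^{h(z)^\top h(z_j^-)/\tau}\bigr)$. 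All exponentials lie in $[e^{-1/\tau},e^{1/\tau}]$ because $h$ maps into the unit sphere. Hence the first summand inside the logarithm vanishes. By the strong law of large numbers, applied conditionally on $z$ to the i.i.d.\ negatives, the empirical mean converges a.s.\ to $\mathbb{E}_{z'\sim U(K(z))}e^{h(z)^\top h(z')/\tau} = |K(z)|^{-1}\int_{K(z)} e^{h(z)^\top h(z')/\tau}\,d\sigma_{K(z)}(z')$.

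The integrand is uniformly bounded between $-2/\tau - \log 2$-type constants, so dominated convergence lets us exchange the limit with the expectation over $(z,\tilde z)$ and the negatives. This gives
\[
\lim_{M\to\infty}\mathcal{L}-\log M = \mathbb{E}_{z\sim P_Z}\mathbb{E}_{\tilde z\sim P^K_{\tilde Z|z}}\Bigl[-\tfrac{h(z)^\top h(\tilde z)}{\tau} + \log\!\int_{K(z)} e^{h(z)^\top h(z')/\tau}d\sigma_{K(z)}(z')\Bigr] - \mathbb{E}_{z\sim P_Z}\log|K(z)|.
\]
The bracket is exactly $-\log q^K_{h,z}(\tilde z)$ for the density in Equation~\ref{eq:corrected-qh} with respect to $\sigma_{K(z)}$. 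Its conditional expectation is therefore $H(P^K_{\tilde Z|z},Q^K_{h,z})$, computed with densities relative to the intrinsic measure on $K(z)$, and this yields the claim.

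The main obstacle is the measure-theoretic bookkeeping rather than the limit itself. First, we must check that $z\mapsto |K(z)| = |\mathbb{S}^{\ell-1}|\,r(z)^{\ell-1}$ is measurable and that $\mathbb{E}\log|K(z)|$ is finite. This needs $\mathbb{E}|\log r(z)|<\infty$ under the uniform law, which holds because the density of $\|u\|$ near $1$ behaves like $(1-\|u\|^2)^{(\ell-2)/2}$ and hence is integrable against $\log$. We treat the $\ell=1$ case, where $K(z)$ is two points and $\sigma_{K(z)}$ is counting measure, by the same formula. Second, dominated convergence needs the conditional LLN to hold jointly measurably in $z$. We handle this with Fubini on the product space of anchors and negative sequences, where the a.s.\ convergence for each fixed $z$ integrates to a.s.\ convergence overall.
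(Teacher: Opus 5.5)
Your proof is correct and follows the paper's argument. Both rest on rewriting the normalizer $\int_{K(z)} e^{h(z)^\top h(z')/\tau}\,d\sigma_{K(z)}(z')$ as $|K(z)|$ times an expectation under $U(K(z))$, then splitting the logarithm to isolate the $-\mathbb{E}_{z\sim P_Z}\log|K(z)|$ term. The paper works from the cross-entropy side and takes the $M\to\infty$ limit from Zimmermann et al.\ as given, while you derive that limit explicitly via the strong law of large numbers and dominated convergence, and you also check integrability of $\log|K(z)|$, which the paper leaves implicit.
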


The proof follows the same steps as in~\cite{zimmermann2022contrastivelearninginvertsdata}, but with conditionals defined on $K(z)$.

\begin{proof}
\textbf{Step 1: Cross-entropy decomposition.}
The cross-entropy between the true and model conditionals, both defined on $K(z)$, is:
\begin{align}
H(P^K_{\tilde{Z}|z}, Q^K_{h,z}) &= -\mathbb{E}_{\tilde{z} \sim P^K_{\tilde{Z}|z}}[\log q^K_{h,z}(\tilde{z})]\\
&= -\mathbb{E}_{\tilde{z} \sim P^K_{\tilde{Z}|z}}\left[\log\left(\frac{e^{h(z)^\top h(\tilde{z})/\tau}}{\int_{K(z)}e^{h(z)^\top h(z')/\tau}d\sigma_{K(z)}(z')}\right)\right].
\end{align}

\begin{align}
H(P^K_{\tilde{Z}|z}, Q^K_{h,z}) &= -\mathbb{E}_{\tilde{z} \sim P^K_{\tilde{Z}|z}}\left[\frac{1}{\tau}h(z)^\top h(\tilde{z}) - \log C_h(z)\right]\\
&= -\frac{1}{\tau}\mathbb{E}_{\tilde{z} \sim P^K_{\tilde{Z}|z}}[h(z)^\top h(\tilde{z})] + \log C_h(z)
\end{align}
where $C_h(z) = \int_{K(z)}e^{h(z)^\top h(z')/\tau}d\sigma_{K(z)}(z')$ is the normalizing constant over the constrained manifold.

\textbf{Step 2: Normalizing constant estimation.}
Using the fact that the uniform distribution on $K(z)$ has density $1/|K(z)|$:
\begin{align}
C_h(z) &= \int_{K(z)}e^{h(z)^\top h(z')/\tau}d\sigma_{K(z)}(z') = |K(z)| \cdot \mathbb{E}_{z' \sim \text{U}(K(z))}\left[e^{h(z)^\top h(z')/\tau}\right]
\end{align}

\textbf{Step 3: Final form.}
Substituting the estimate of $C_h(z)$ back into the cross-entropy expression and splitting the logarithm:
\begin{align}
H(P^K_{\tilde{Z}|z}, Q^K_{h,z}) &= -\frac{1}{\tau}\mathbb{E}_{\tilde{z} \sim P^K_{\tilde{Z}|z}}[h(z)^\top h(\tilde{z})] \\
&\quad + \log \mathbb{E}_{z' \sim \text{U}(K(z))}\left[e^{h(z)^\top h(z')/\tau}\right] + \log |K(z)|
\end{align}

Taking expectations over $z \sim P_Z$ yields the stated result.
\end{proof}

\begin{proof}[Proof of Theorem~\ref{thm:orthogonal-minimizer}]
With the corrected model conditional, the supports of $P^K_{\tilde{Z}|z}$ and $Q^K_{h,z}$ now match: both are restricted to the submanifold $K(z)$. Within this constrained setting, the analysis proceeds analogously to Theorem~\ref{thm:cross-entropy-min-isometry-app}.

For any $h \in O(k)$, we have $h(z)^\top h(\tilde{z}) = z^\top \tilde{z}$ for all $z, \tilde{z} \in \mathcal{Z}$. This means the model conditional $Q^K_{h,z}$ exactly matches the true conditional $P^K_{\tilde{Z}|z}$ on the constrained manifold $K(z)$:
\[
q^K_{h,z}(\tilde{z}) =
\frac{e^{z^\top \tilde{z}/\tau}}
{\int_{K(z)}e^{z^\top z'/\tau}d\sigma_{K(z)}(z')}
= \frac{dP^K_{\tilde{Z}|z}}{d\sigma_{K(z)}}(\tilde{z})
\]
when $\kappa = 1/\tau$.

Since matching distributions achieves zero KL divergence and hence minimal cross-entropy, any orthogonal transformation minimizes the asymptotic contrastive loss.
\end{proof}

Although the corrected objective admits orthogonal solutions, it does not guarantee uniqueness. The following theorem shows that multiple equivalent solutions exist.

\begin{theorem}[Equivalence of Feature Extractors Under Constrained Sampling]
\label{thm:equivalence-extractors-app}
Given a data-generating process $g: \mathcal{Z} \to \mathcal{X}$, uniform marginal law $P_Z$, ground-truth conditional law $P_{\tilde{Z}|z}$ with density $p(\tilde{z}|z)$, and model conditional law $Q_{h,z}$ with density $q_h(\tilde{z}|z)$ that define the expected cross-entropy loss
\[
\mathcal{L}_h = \mathbb{E}_{z \sim P_Z}[H(P_{\tilde{Z}|z}, Q_{h,z})],
\]
let $m: \mathcal{Z} \to \mathcal{Z}$ be an invertible mapping that preserves the marginal law and the ground-truth conditional density:
\[
m_{\#}P_Z=P_Z,\qquad
p(m(\tilde{z})|m(z)) = p(\tilde{z}|z) \quad \forall z, \tilde{z} \in \mathcal{Z}.
\]
Then any two mappings $h_1 := f_1 \circ g$ and $h_2 := f_2 \circ g$ with $h_2(z) := h_1(m(z))$ are equivalent, i.e., $\mathcal{L}_{h_1} = \mathcal{L}_{h_2}$.
\end{theorem}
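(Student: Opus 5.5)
The plan is a change of variables: substitute $z\mapsto m(z)$, $\tilde z\mapsto m(\tilde z)$ in the double integral defining $\mathcal{L}_{h_2}$ and use the two invariance hypotheses on $m$ to recover $\mathcal{L}_{h_1}$. Throughout, the model density $q_h(\tilde z|z)$ is the vMF-type density of \eqref{eq:dist-model} (or its constrained analogue \eqref{eq:corrected-qh}; the argument is identical). The key observation is that $q_h$ depends on $h$ only through the values $h(z)$, $h(\tilde z)$ and a normalizer obtained by integrating over the reference measure.

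\textbf{Step 1: the model density transforms covariantly.} For the model density we have
\[
q_{h_2}(\tilde z|z)=\frac{e^{h_1(m(\tilde z))^\top h_1(m(z))/\tau}}{\int_{\mathcal{Z}} e^{h_1(m(z'))^\top h_1(m(z))/\tau}\,d\sigma(z')}.
\]
In the denominator I substitute $w=m(z')$. To turn this into the normalizer of $q_{h_1}(\cdot|m(z))$, I need $m_\#\sigma=\sigma$. Since $P_Z$ is uniform, it is a constant multiple of $\sigma$, so the hypothesis $m_\#P_Z=P_Z$ gives exactly this. Hence
\[
q_{h_2}(\tilde z|z)=q_{h_1}(m(\tilde z)|m(z)).
\]

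\textbf{Step 2: change variables in the loss.} Write
\[
\mathcal{L}_{h_2}=-\int\!\!\int p(\tilde z|z)\log q_{h_1}(m(\tilde z)|m(z))\,d\sigma(\tilde z)\,dP_Z(z).
\]
Use $p(\tilde z|z)=p(m(\tilde z)|m(z))$ to rewrite the integrand entirely as a function of $(m(z),m(\tilde z))$. Then apply the pushforward identity $\int F(m(x))\,d\mu(x)=\int F\,d(m_\#\mu)$ twice: once for the inner $\sigma$-integral, and once for the outer $P_Z$-integral. Both measures are $m$-invariant by Step 1's remark, so the result is
\[
\mathcal{L}_{h_2}=-\int\!\!\int p(\tilde w|w)\log q_{h_1}(\tilde w|w)\,d\sigma(\tilde w)\,dP_Z(w)=\mathcal{L}_{h_1}.
\]
Measurability of $m$ and of $m^{-1}$, which follows from invertibility and is assumed, justifies these substitutions. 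Integrability is fine because all the integrands are bounded on the compact sphere.

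\textbf{Main obstacle.} The delicate point is the reference measure. The conditional densities are taken with respect to $\sigma$ (or $\sigma_{K(z)}$ in the constrained case), so the argument needs invariance of the reference measure under $m$, not only of $P_Z$.

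In the full-sphere case this follows from uniformity, as noted in Step 1. In the constrained case I would additionally need $m$ to map $K(z)$ onto $K(m(z))$ and to push $\sigma_{K(z)}$ to $\sigma_{K(m(z))}$. I would try to derive this from the density-preservation hypothesis, since the supports must correspond. If that derivation fails, I would state it as the implicit reading of ``preserves the ground-truth conditional''; equivalently, I would read the hypothesis as $m_\#P_{\tilde Z|z}=P_{\tilde Z|m(z)}$, which makes Step 2 immediate.
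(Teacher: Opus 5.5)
Your proposal is correct and follows the paper's change-of-variables argument: rewrite $q_{h_2}(\tilde z|z)=q_{h_1}(m(\tilde z)|m(z))$, then use density preservation for the inner integral and $m_{\#}P_Z=P_Z$ for the outer one. You also note that both the normalizer identity and the inner substitution need $m_{\#}\sigma=\sigma$, which you obtain from uniformity of $P_Z$; the paper leaves this implicit in ``using the definition of $h_2$''. In the constrained case your fallback is not needed, although correspondence of supports alone would not give it: because the pointwise hypothesis holds for every anchor in a fiber $K(z)$, matching the constrained vMF density values (whose exponents $v^\top\tilde v$ range over $[-r(z)^2,r(z)^2]$) forces $m$ to preserve all inner products within the fiber, so $m$ maps $K(z)$ isometrically onto $K(m(z))$ and $m_{\#}\sigma_{K(z)}=\sigma_{K(m(z))}$.
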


\begin{proof}
Starting with the cross-entropy loss for $h_2$:
\begin{equation}
\mathcal{L}_{h_2} = \mathbb{E}_{z \sim P_Z}\left[\mathbb{E}_{\tilde{z} \sim P_{\tilde{Z}|z}}[-\log q_{h_2}(\tilde{z}|z)]\right]
\end{equation}

Using the definition of $h_2$:
\begin{equation}
= \mathbb{E}_{z \sim P_Z}\left[\mathbb{E}_{\tilde{z} \sim P_{\tilde{Z}|z}}[-\log q_{h_1}(m(\tilde{z})|m(z))]\right]
\end{equation}

Since $m$ preserves the conditional density:
\begin{equation}
= \mathbb{E}_{z \sim P_Z}\left[\mathbb{E}_{m(\tilde{z}) \sim P_{\tilde{Z}|m(z)}}[-\log q_{h_1}(m(\tilde{z})|m(z))]\right]
\end{equation}

Since $m_{\#}P_Z=P_Z$, we can change variables in the outer expectation:
\begin{equation}
= \mathbb{E}_{m(z) \sim P_Z}\left[\mathbb{E}_{m(\tilde{z}) \sim P_{\tilde{Z}|m(z)}}[-\log q_{h_1}(m(\tilde{z})|m(z))]\right]
\end{equation}

Finally, because $m$ is invertible:
\begin{equation}
= \mathbb{E}_{z \sim P_Z}\left[\mathbb{E}_{\tilde{z} \sim P_{\tilde{Z}|z}}[-\log q_{h_1}(\tilde{z}|z)]\right] = \mathcal{L}_{h_1}
\end{equation}
\end{proof}

This theorem reveals a fundamental non-uniqueness in the solution space of the modified contrastive learning objective. Any conditional-preserving transformation of the latent space yields identical loss values, meaning that without additional constraints through inductive bias, the objective cannot distinguish between semantically meaningful representations and arbitrary rearrangements that preserve only local structure within constrained manifolds.

\section{Generative Processes and Encoder Architectures}
\label{app:architectures}

This section describes the generative processes and encoder architectures used in our synthetic experiments.

\subsection{Generative Processes}
\label{app:generative-processes}

We employ five generative processes $g: \mathbb{S}^{d-1} \to \mathbb{R}^D$ of varying complexity to test our theoretical predictions across different data-generating mechanisms.

\begin{algorithm}[H]
\caption{Identity Process}
\begin{algorithmic}[1]
\REQUIRE $z \in \mathbb{S}^{d-1}$
\STATE $x \gets z$
\STATE \textbf{return} $x \in \mathbb{R}^d$
\end{algorithmic}
\end{algorithm}

\begin{algorithm}[H]
\caption{Linear Process}
\begin{algorithmic}[1]
\REQUIRE $z \in \mathbb{S}^{d-1}$, weight matrix $W \in \mathbb{R}^{D \times d}$ with $\text{rank}(W) = d$
\STATE $x \gets Wz$
\STATE \textbf{return} $x \in \mathbb{R}^D$
\end{algorithmic}
\end{algorithm}

\begin{algorithm}[H]
\caption{Spiral Rotation Process}
\begin{algorithmic}[1]
\REQUIRE $z = (z_1, z_2, z_3) \in \mathbb{S}^{2}$, period $n$
\STATE Compute rotation angle: $\theta \gets n \pi z_3$
\STATE Apply 2D rotation to first two coordinates:
\STATE \quad $x_1 \gets \cos(\theta) z_1 - \sin(\theta) z_2$
\STATE \quad $x_2 \gets \sin(\theta) z_1 + \cos(\theta) z_2$
\STATE $x_3 \gets z_3$
\STATE \textbf{return} $x = (x_1, x_2, x_3) \in \mathbb{R}^3$
\end{algorithmic}
\end{algorithm}

\begin{algorithm}[H]
\caption{Patches Process}
\begin{algorithmic}[1]
\REQUIRE $z = (z_1, z_2, z_3) \in \mathbb{S}^{2}$, number of slices $K$
\STATE \textbf{Step 1:} Apply piecewise rotation based on $z_3$
\STATE \quad Determine bucket $k \gets \lfloor (z_3 + 1) \cdot K / 2 \rfloor$
\STATE \quad Compute angle $\theta_k \gets -\pi / \max(1, K - k)$
\STATE \quad $z' \gets R_{xy}(\theta_k) \cdot z$ \COMMENT{Rotate in $(x,y)$ plane}
\STATE \textbf{Step 2:} Apply 3D rotation (pitch $= \pi/2$)
\STATE \quad $z'' \gets R_y(\pi/2) \cdot z'$
\STATE \textbf{Step 3:} Apply second piecewise rotation
\STATE \quad Determine new bucket, apply rotation as in Step 1
\STATE \textbf{return} $x \in \mathbb{R}^3$
\end{algorithmic}
\end{algorithm}

\begin{algorithm}[H]
\caption{Invertible MLP Process~\cite{hyvarinen2016unsupervised}}
\begin{algorithmic}[1]
\REQUIRE $z \in \mathbb{S}^{d-1}$, MLP layers $\{W_i, b_i\}_{i=1}^L$ with conditioning
\STATE $h_0 \gets z$
\FOR{$i = 1$ to $L$}
    \STATE $h_i \gets \sigma(W_i h_{i-1} + b_i)$ \COMMENT{$\sigma$ = LeakyReLU}
\ENDFOR
\STATE \textbf{return} $x = h_L \in \mathbb{R}^D$
\end{algorithmic}
\end{algorithm}

\subsection{Encoder Architectures}

We compare two classes of encoders: a generic MLP encoder representing low inductive bias, and inverse encoders designed to mirror the structure of each generative process, representing high inductive bias.

\begin{algorithm}[H]
\caption{MLP Encoder (Low Inductive Bias)}
\begin{algorithmic}[1]
\REQUIRE $x \in \mathbb{R}^D$, hidden dims $[128, 256, 256, 256, 128]$
\STATE $h_0 \gets x$
\FOR{$i = 1$ to $L-1$}
    \STATE $h_i \gets \text{ReLU}(\text{BatchNorm}(W_i h_{i-1} + b_i))$
\ENDFOR
\STATE $z' \gets W_L h_{L-1} + b_L$
\STATE $z \gets z' / \|z'\|_2$ \COMMENT{Project to sphere}
\STATE \textbf{return} $z \in \mathbb{S}^{d-1}$
\end{algorithmic}
\end{algorithm}

\begin{algorithm}[H]
\caption{Inverse Linear Encoder (High Inductive Bias)}
\begin{algorithmic}[1]
\REQUIRE $x \in \mathbb{R}^D$, learnable $W \in \mathbb{R}^{d \times D}$, $b \in \mathbb{R}^d$
\STATE $z' \gets Wx + b$
\STATE $z \gets z' / \|z'\|_2$
\STATE \textbf{return} $z \in \mathbb{S}^{d-1}$
\end{algorithmic}
\end{algorithm}

\begin{algorithm}[H]
\caption{Inverse Spiral Encoder (High Inductive Bias)}
\begin{algorithmic}[1]
\REQUIRE $x \in \mathbb{R}^3$, period $n$
\STATE Predict rotation control: $c \gets \text{MLP}_{\text{rot}}(x)$ \COMMENT{3-layer MLP}
\STATE Extract spatial components: $(x_1, x_2) \gets x_{1:2}$
\STATE Compute inverse rotation: $\theta \gets -n \pi c$
\STATE Apply inverse rotation:
\STATE \quad $z_1 \gets \cos(\theta) x_1 - \sin(\theta) x_2$
\STATE \quad $z_2 \gets \sin(\theta) x_1 + \cos(\theta) x_2$
\STATE $z' \gets (z_1, z_2, c)$
\STATE $z \gets z' / \|z'\|_2$
\STATE \textbf{return} $z \in \mathbb{S}^{2}$
\end{algorithmic}
\end{algorithm}

\begin{algorithm}[H]
\caption{Inverse Patches Encoder (High Inductive Bias)}
\begin{algorithmic}[1]
\REQUIRE $x \in \mathbb{R}^3$, number of slices $K$
\STATE Predict original $z$-coordinate: $z_{\text{pred}} \gets \tanh(\text{MLP}_z(x))$
\STATE Predict bucket probabilities: $w \gets \text{softmax}(\text{MLP}_{\text{bucket}}(x))$
\STATE \textbf{Step 1:} Inverse second piecewise rotation
\STATE \quad $x_1 \gets \sum_{k=0}^{K-1} w_k \cdot R_{xy}(-\theta_k) \cdot x$ \COMMENT{Soft inverse}
\STATE \textbf{Step 2:} Inverse 3D rotation
\STATE \quad $x_2 \gets R_y(-\pi/2) \cdot x_1$
\STATE \textbf{Step 3:} Inverse first piecewise rotation
\STATE \quad $x_3 \gets \sum_{k=0}^{K-1} w_k \cdot R_{xy}(-\theta_k) \cdot x_2$
\STATE Replace $z$-coordinate: $x_{\text{rec}} \gets (x_{3,1}, x_{3,2}, z_{\text{pred}})$
\STATE $z \gets x_{\text{rec}} / \|x_{\text{rec}}\|_2$
\STATE \textbf{return} $z \in \mathbb{S}^{2}$
\end{algorithmic}
\end{algorithm}

\section{Sampling Procedures}
\label{app:sampling}

We use standard techniques for sampling from the uniform distribution on the sphere and the von Mises-Fisher distribution.

\begin{algorithm}[H]
\caption{Uniform Sampling from $\mathbb{S}^{d-1}$}
\begin{algorithmic}[1]
\REQUIRE Dimension $d \in \mathbb{N}$
\STATE Draw $z_i \sim \mathcal{N}(0,1)$ independently for $i = 1, \ldots, d$
\STATE $z \gets (z_1, \ldots, z_d)^\top$
\STATE $v \gets z / \|z\|_2$
\STATE \textbf{return} $v \in \mathbb{S}^{d-1}$
\end{algorithmic}
\end{algorithm}

\begin{algorithm}[H]
\caption{von Mises-Fisher Sampling~\cite{wood1994simulation}}
\label{alg:vmf-sampling}
\begin{algorithmic}[1]
\REQUIRE Mean direction $\mu \in \mathbb{S}^{d-1}$, concentration $\kappa > 0$
\STATE $p \gets d - 1$
\STATE $b \gets p / (\sqrt{4\kappa^2 + p^2} + 2\kappa)$
\STATE $x \gets (1 - b) / (1 + b)$
\STATE $c \gets \kappa x + p \log(1 - x^2)$
\REPEAT
    \STATE Sample $t \sim \text{Beta}(p/2, p/2)$
    \STATE $w \gets (1 - (1+b)t) / (1 - (1-b)t)$
    \STATE Sample $u \sim \text{Uniform}(0, 1)$
\UNTIL{$\log(u) \leq \kappa w + p \log(1 - xw) - c$}
\STATE Sample $g \sim \mathcal{N}(0, I_d)$
\STATE $v \gets g - (g^\top \mu) \mu$ \COMMENT{Project out $\mu$ component}
\STATE $v \gets v / \|v\|_2$
\STATE $s \gets \sqrt{1 - w^2} \cdot v + w \cdot \mu$
\STATE \textbf{return} $s \sim \text{vMF}(\mu, \kappa)$
\end{algorithmic}
\end{algorithm}

\begin{algorithm}[H]
\caption{Conditional Sampling: Diversity Holds}
\begin{algorithmic}[1]
\REQUIRE Anchor $z \in \mathbb{S}^{d-1}$, concentration $\kappa$
\STATE $\tilde{z} \gets \text{vMF}(z, \kappa)$ \COMMENT{Sample positive using Algorithm~\ref{alg:vmf-sampling}}
\STATE \textbf{return} $\tilde{z} \in \mathbb{S}^{d-1}$
\end{algorithmic}
\end{algorithm}

\begin{algorithm}[H]
\caption{Conditional Sampling: Diversity Violated}
\begin{algorithmic}[1]
\REQUIRE Anchor $z = (u, v) \in \mathbb{S}^{d-1}$, fixed dimensions $d_{\text{fixed}}$, concentration $\kappa$
\STATE $u \gets z_{1:d_{\text{fixed}}}$ \COMMENT{Fixed component}
\STATE $v \gets z_{d_{\text{fixed}}+1:d}$ \COMMENT{Varying component}
\STATE $r \gets \|v\|_2$ \COMMENT{Radius of sub-sphere}
\IF{$r > 0$}
    \STATE $\hat{v} \gets v / r$ \COMMENT{Normalize to sub-sphere}
    \STATE $\tilde{v} \gets \text{vMF}(\hat{v}, \kappa)$ \COMMENT{Sample on sub-sphere}
    \STATE $\tilde{v} \gets r \cdot \tilde{v}$ \COMMENT{Scale back}
\ELSE
    \STATE $\tilde{v} \gets v$
\ENDIF
\STATE $\tilde{z} \gets (u, \tilde{v})$ \COMMENT{Concatenate fixed and sampled}
\STATE \textbf{return} $\tilde{z} \in \mathbb{S}^{d-1}$
\end{algorithmic}
\end{algorithm}

\begin{algorithm}[H]
\caption{Adapted InfoNCE with Same-anchor Negatives}
\label{alg:adapted-infonce-same-anchor}
\begin{algorithmic}[1]
\REQUIRE Encoder $h$, batch $\{x_1,\ldots,x_N\}$, stochastic augmentation $\mathcal{T}$, temperature $\tau$, negatives per anchor $M$
\FOR{each anchor $x_i$}
    \STATE Draw positive view $\tilde{x}_i \gets \mathcal{T}(x_i)$
    \FOR{$j=1$ to $M$}
        \STATE Draw same-anchor negative $x^-_{i,j} \gets \mathcal{T}(x_i)$ independently
    \ENDFOR
    \STATE Compute
    \[
    \mathcal{L}_i =
    -\log
    \frac{\exp(h(x_i)^\top h(\tilde{x}_i)/\tau)}
    {\exp(h(x_i)^\top h(\tilde{x}_i)/\tau)
    + \sum_{j=1}^{M}\exp(h(x_i)^\top h(x^-_{i,j})/\tau)}.
    \]
\ENDFOR
\STATE \textbf{return} $\mathcal{L} = \frac{1}{N}\sum_{i=1}^{N}\mathcal{L}_i$
\end{algorithmic}
\end{algorithm}

Algorithm~\ref{alg:adapted-infonce-same-anchor} approximates sampling negatives from $K(z)$ in observation space. Each call to $\mathcal{T}$ uses independent randomness conditional on the same anchor $x_i$, which is the observation-space analogue of drawing conditionally independent samples from $P_{\tilde{Z}|z_i}$. The same sampling family is applied repeatedly to the same anchor, so all views preserve the invariant component encoded by the mechanism while varying the remaining components. This changes the negative-sampling task relative to standard InfoNCE, which samples negatives from other instances. In our framework this is consistent with the goal of recovering latent distance structure rather than discriminating instance identity, but it can trade cross-instance discrimination signal for better support matching. The approximation also samples from the augmentation-induced distribution on $K(z)$ rather than uniformly from $K(z)$.

\section{Evaluation Metrics}
\label{app:metrics}

We evaluate latent space reconstruction quality using three complementary metrics, following standard practice in the identifiability literature~\cite{hyvarinen2016unsupervised, zimmermann2022contrastivelearninginvertsdata}.

\begin{algorithm}[H]
\caption{Linear Identifiability ($R^2$)}
\begin{algorithmic}[1]
\REQUIRE Ground-truth latents $Z = \{z_i\}_{i=1}^n$, recovered latents $\hat{Z} = \{\hat{z}_i\}_{i=1}^n$
\STATE Fit linear regression: $\hat{z} = Az + b$ minimizing $\sum_i \|z_i - (A\hat{z}_i + b)\|^2$
\STATE Compute predictions: $\tilde{z}_i \gets A\hat{z}_i + b$
\STATE Compute total variance: $\text{SS}_{\text{tot}} \gets \sum_{i=1}^n \|z_i - \bar{z}\|^2$
\STATE Compute residual variance: $\text{SS}_{\text{res}} \gets \sum_{i=1}^n \|z_i - \tilde{z}_i\|^2$
\STATE $R^2 \gets 1 - \text{SS}_{\text{res}} / \text{SS}_{\text{tot}}$
\STATE \textbf{return} $R^2 \in (-\infty, 1]$ \COMMENT{$1.0$ = perfect linear recovery}
\end{algorithmic}
\end{algorithm}

\begin{algorithm}[H]
\caption{Mean Correlation Coefficient (MCC)}
\begin{algorithmic}[1]
\REQUIRE Ground-truth latents $Z \in \mathbb{R}^{n \times d}$, recovered latents $\hat{Z} \in \mathbb{R}^{n \times d}$
\STATE Compute correlation matrix $C \in \mathbb{R}^{d \times d}$:
\STATE \quad $C_{ij} \gets |\text{corr}(Z_{:,i}, \hat{Z}_{:,j})|$
\STATE Find optimal assignment via Munkres algorithm:
\STATE \quad $\pi^* \gets \arg\max_{\pi} \sum_{i=1}^d C_{i, \pi(i)}$
\STATE $\text{MCC} \gets \frac{1}{d} \sum_{i=1}^d C_{i, \pi^*(i)}$
\STATE \textbf{return} $\text{MCC} \in [0, 1]$ \COMMENT{$1.0$ = perfect factor alignment}
\end{algorithmic}
\end{algorithm}

\begin{algorithm}[H]
\caption{Angular Preservation Error (APE)}
\begin{algorithmic}[1]
\REQUIRE Ground-truth latents $Z = \{z_i\}_{i=1}^n$, recovered latents $\hat{Z} = \{\hat{z}_i\}_{i=1}^n$
\STATE Initialize $\text{APE} \gets 0$
\FOR{$i = 1$ to $n$}
    \FOR{$j = 1$ to $n$, $j \neq i$}
        \STATE $\text{APE} \gets \text{APE} + |z_i^\top z_j - \hat{z}_i^\top \hat{z}_j|$
    \ENDFOR
\ENDFOR
\STATE $\text{APE} \gets \text{APE} / (n(n-1))$
\STATE \textbf{return} $\text{APE} \in [0, 2]$ \COMMENT{$0.0$ = perfect isometry}
\end{algorithmic}
\end{algorithm}

\begin{table}[H]
\centering
\small
\begin{tabular}{lccc}
\toprule
\textbf{Metric} & \textbf{Range} & \textbf{Measures} & \textbf{Optimal} \\
\midrule
$R^2$ & $(-\infty, 1]$ & Linear predictability & $1.0$ \\
MCC & $[0, 1]$ & Factor alignment (up to permutation) & $1.0$ \\
APE & $[0, 2]$ & Angular/geometric preservation & $0.0$ \\
\bottomrule
\end{tabular}
\caption{Summary of evaluation metrics for latent space reconstruction.}
\label{tab:metrics-summary}
\end{table}

\section{Additional Experimental Results}
\label{app:additional-results}

\subsection{Diversity Condition Holds}

Table~\ref{tab:diversity-holds} reports MCC, APE, and final loss when the diversity condition is satisfied.

\begin{table}[H]
    \centering
    \caption{Evaluation metrics when diversity condition holds. Results reported as mean $\pm$ std across 5 random seeds using MLP encoder. Lower APE is better.}
    \label{tab:diversity-holds}
    \setlength{\tabcolsep}{6pt}
    \begin{tabular}{@{}lccc@{}}
    \toprule
    \textbf{Generative Process} & \textbf{MCC} & \textbf{APE} & \textbf{Final Loss} \\
    \midrule
    Identity & $0.775 \pm 0.060$ & $0.007 \pm 0.001$ & $7.388 \pm 0.007$ \\
    Linear & $0.854 \pm 0.083$ & $0.007 \pm 0.001$ & $7.386 \pm 0.006$ \\
    Invertible MLP & $0.822 \pm 0.038$ & $0.010 \pm 0.002$ & $7.383 \pm 0.012$ \\
    Patches & $0.836 \pm 0.060$ & $0.031 \pm 0.001$ & $7.429 \pm 0.019$ \\
    Spiral & $0.858 \pm 0.041$ & $0.012 \pm 0.001$ & $7.390 \pm 0.011$ \\
    \bottomrule
    \end{tabular}
\end{table}

\subsection{Diversity Condition Violated}

Tables~\ref{tab:diversity-violated-mcc}-\ref{tab:diversity-violated-loss} compare three approaches when diversity is violated: standard InfoNCE, adapted InfoNCE (Section~\ref{subsec:compensating-violated-condition}), and InfoNCE with inductive bias.

\begin{table}[H]
    \centering
    \caption{Mean Correlation Coefficient (MCC) when diversity condition is violated. Results reported as mean $\pm$ std across 5 random seeds.}
    \label{tab:diversity-violated-mcc}
    \setlength{\tabcolsep}{5pt}
    \begin{tabular}{@{}lccc@{}}
    \toprule
    & \multicolumn{3}{c}{\textbf{Diversity Condition Violated}} \\
    \cmidrule(lr){2-4}
    \textbf{Generative Process} & \textbf{InfoNCE} & \textbf{InfoNCE Adapted} & \textbf{InfoNCE + Ind. Bias} \\
    \midrule
    Identity & $0.143 \pm 0.016$ & $0.607 \pm 0.039$ & $0.801 \pm 0.102$ \\
    Linear & $0.111 \pm 0.075$ & $0.592 \pm 0.085$ & $0.841 \pm 0.094$ \\
    Invertible MLP & $0.178 \pm 0.116$ & $0.569 \pm 0.048$ & N/A \\
    Patches & $0.265 \pm 0.018$ & $0.624 \pm 0.035$ & $0.687 \pm 0.014$ \\
    Spiral & $0.629 \pm 0.294$ & $0.602 \pm 0.044$ & $0.999 \pm 0.001$ \\
    \bottomrule
    \end{tabular}
\end{table}

\begin{table}[H]
    \centering
    \caption{Angular Preservation Error (APE) when diversity condition is violated. Results reported as mean $\pm$ std across 5 random seeds. Lower is better.}
    \label{tab:diversity-violated-ape}
    \setlength{\tabcolsep}{5pt}
    \begin{tabular}{@{}lccc@{}}
    \toprule
    & \multicolumn{3}{c}{\textbf{Diversity Condition Violated}} \\
    \cmidrule(lr){2-4}
    \textbf{Generative Process} & \textbf{InfoNCE} & \textbf{InfoNCE Adapted} & \textbf{InfoNCE + Ind. Bias} \\
    \midrule
    Identity & $0.322 \pm 0.007$ & $0.152 \pm 0.002$ & $0.047 \pm 0.000$ \\
    Linear & $0.325 \pm 0.012$ & $0.155 \pm 0.002$ & $0.046 \pm 0.000$ \\
    Invertible MLP & $0.305 \pm 0.034$ & $0.168 \pm 0.003$ & N/A \\
    Patches & $0.283 \pm 0.008$ & $0.154 \pm 0.002$ & $0.109 \pm 0.003$ \\
    Spiral & $0.136 \pm 0.115$ & $0.156 \pm 0.002$ & $0.016 \pm 0.003$ \\
    \bottomrule
    \end{tabular}
\end{table}

\begin{table}[H]
    \centering
    \caption{Final InfoNCE loss when diversity condition is violated. Results reported as mean $\pm$ std across 5 random seeds.}
    \label{tab:diversity-violated-loss}
    \setlength{\tabcolsep}{5pt}
    \begin{tabular}{@{}lccc@{}}
    \toprule
    & \multicolumn{3}{c}{\textbf{Diversity Condition Violated}} \\
    \cmidrule(lr){2-4}
    \textbf{Generative Process} & \textbf{InfoNCE} & \textbf{InfoNCE Adapted} & \textbf{InfoNCE + Ind. Bias} \\
    \midrule
    Identity & $5.709 \pm 0.000$ & $6.689 \pm 0.007$ & $6.058 \pm 0.009$ \\
    Linear & $5.708 \pm 0.001$ & $6.697 \pm 0.005$ & $6.043 \pm 0.009$ \\
    Invertible MLP & $5.715 \pm 0.002$ & $6.009 \pm 0.006$ & N/A \\
    Patches & $5.850 \pm 0.021$ & $6.028 \pm 0.009$ & $6.222 \pm 0.007$ \\
    Spiral & $5.949 \pm 0.140$ & $6.011 \pm 0.012$ & $6.075 \pm 0.014$ \\
    \bottomrule
    \end{tabular}
\end{table}

\subsection{Progressive Diversity Violation}

Table~\ref{tab:progressive-violation} and Figure~\ref{fig:synthetic-progressive} examine how performance degrades as the diversity condition is progressively violated in a 10-dimensional latent space. The parameter $d_{\text{fixed}}$ denotes the number of dimensions held constant during positive pair sampling.

\begin{table}[H]
    \centering
    \caption{Linear identifiability ($R^2$) as a function of diversity violation severity for 10D latent space. $d_{\text{fixed}}$ denotes the number of dimensions held constant during positive pair sampling.}
    \label{tab:progressive-violation}
    \setlength{\tabcolsep}{5pt}
    \begin{tabular}{@{}cccc@{}}
    \toprule
    $d_{\text{fixed}}$ & Violation Ratio & \textbf{Linear} & \textbf{Monomial} \\
    \midrule
    0  & 0.0 & $0.969 \pm 0.047$ & $0.990 \pm 0.000$ \\
    1  & 0.1 & $0.023 \pm 0.011$ & $0.015 \pm 0.005$ \\
    2  & 0.2 & $0.014 \pm 0.003$ & $0.015 \pm 0.003$ \\
    3  & 0.3 & $0.027 \pm 0.009$ & $0.034 \pm 0.008$ \\
    4  & 0.4 & $0.061 \pm 0.021$ & $0.042 \pm 0.004$ \\
    5  & 0.5 & $0.099 \pm 0.019$ & $0.097 \pm 0.015$ \\
    6  & 0.6 & $0.168 \pm 0.017$ & $0.185 \pm 0.021$ \\
    7  & 0.7 & $0.213 \pm 0.035$ & $0.284 \pm 0.022$ \\
    8  & 0.8 & $0.238 \pm 0.049$ & $0.348 \pm 0.022$ \\
    9  & 0.9 & $0.299 \pm 0.047$ & $0.425 \pm 0.056$ \\
    10 & 1.0 & $0.032 \pm 0.004$ & $0.086 \pm 0.032$ \\
    \bottomrule
    \end{tabular}
\end{table}

\subsection{Constraint Ratio Experiments}

Tables~\ref{tab:all-rho}-\ref{tab:all-rho-loss} show identifiability metrics as a function of the constraint ratio $\rho$, which interpolates between standard InfoNCE ($\rho = 0$) and the fully corrected objective ($\rho = 1$). The parameter $\rho$ controls the fraction of negative samples drawn from the constrained manifold $K(z)$ versus the full sphere $\mathcal{Z}$.

\begin{table}[H]
    \centering
    \caption{Linear identifiability ($R^2$) across all generative processes and encoder types as a function of constraint ratio $\rho$. Results reported as mean across 5 seeds.}
    \label{tab:all-rho}
    \scriptsize
    \setlength{\tabcolsep}{3pt}
    \begin{tabular}{@{}c|cc|cc|c|cc|cc@{}}
    \toprule
    & \multicolumn{2}{c|}{\textbf{Identity}} & \multicolumn{2}{c|}{\textbf{Linear}} & \textbf{InvMLP} & \multicolumn{2}{c|}{\textbf{Patches}} & \multicolumn{2}{c}{\textbf{Spiral}} \\
    $\rho$ & MLP & Inv & MLP & Inv & MLP & MLP & Inv & MLP & Inv \\
    \midrule
    0.0 & $.053$ & $.994$ & $.112$ & $.995$ & $.286$ & $.265$ & $.871$ & $.331$ & $.996$ \\
    0.1 & $.228$ & $.997$ & $.293$ & $.997$ & $.317$ & $.651$ & $.892$ & $.731$ & $.781$ \\
    0.2 & $.306$ & $.999$ & $.299$ & $.999$ & $.350$ & $.763$ & $.878$ & $.622$ & $.963$ \\
    0.3 & $.226$ & $1.00$ & $.201$ & $1.00$ & $.554$ & $\mathbf{.820}$ & $.870$ & $.797$ & $.999$ \\
    0.4 & $.502$ & $.999$ & $.452$ & $.999$ & $\mathbf{.666}$ & $.745$ & $.859$ & $.813$ & $.842$ \\
    0.5 & $.575$ & $.997$ & $.487$ & $.997$ & $.544$ & $.737$ & $.819$ & $\mathbf{.822}$ & $.991$ \\
    0.6 & $.525$ & $.991$ & $.461$ & $.991$ & $.564$ & $.719$ & $.645$ & $.803$ & $.831$ \\
    0.7 & $.469$ & $.979$ & $.333$ & $.979$ & $.524$ & $.731$ & $.674$ & $.661$ & $.846$ \\
    0.8 & $.402$ & $.958$ & $.350$ & $.958$ & $.485$ & $.681$ & $.680$ & $.620$ & $.826$ \\
    0.9 & $.440$ & $.905$ & $.457$ & $.904$ & $.487$ & $.675$ & $.623$ & $.637$ & $.811$ \\
    1.0 & $\mathbf{.626}$ & $.709$ & $\mathbf{.650}$ & $.680$ & $.597$ & $.633$ & $.656$ & $.624$ & $.572$ \\
    \bottomrule
    \end{tabular}
\end{table}

\begin{table}[H]
    \centering
    \caption{Mean Correlation Coefficient (MCC) across all generative processes and encoder types as a function of constraint ratio $\rho$. Results reported as mean across 5 seeds.}
    \label{tab:all-rho-mcc}
    \scriptsize
    \setlength{\tabcolsep}{3pt}
    \begin{tabular}{@{}c|cc|cc|c|cc|cc@{}}
    \toprule
    & \multicolumn{2}{c|}{\textbf{Identity}} & \multicolumn{2}{c|}{\textbf{Linear}} & \textbf{InvMLP} & \multicolumn{2}{c|}{\textbf{Patches}} & \multicolumn{2}{c}{\textbf{Spiral}} \\
    $\rho$ & MLP & Inv & MLP & Inv & MLP & MLP & Inv & MLP & Inv \\
    \midrule
    0.0 & $.102$ & $.769$ & $.125$ & $.859$ & $.247$ & $.275$ & $.660$ & $.315$ & $.997$ \\
    0.1 & $.342$ & $.819$ & $.364$ & $.836$ & $.350$ & $.636$ & $.648$ & $.648$ & $.777$ \\
    0.2 & $.381$ & $.815$ & $.362$ & $.838$ & $.453$ & $.670$ & $.648$ & $.632$ & $.978$ \\
    0.3 & $.321$ & $.841$ & $.336$ & $.827$ & $.590$ & $\mathbf{.711}$ & $.631$ & $.685$ & $.999$ \\
    0.4 & $.538$ & $.807$ & $.542$ & $.811$ & $.617$ & $.664$ & $.654$ & $.801$ & $.876$ \\
    0.5 & $.594$ & $.776$ & $.537$ & $.834$ & $.533$ & $.699$ & $.628$ & $\mathbf{.808}$ & $.995$ \\
    0.6 & $.522$ & $.863$ & $.496$ & $.818$ & $.600$ & $.660$ & $.540$ & $.796$ & $.876$ \\
    0.7 & $.499$ & $.778$ & $.394$ & $.758$ & $.561$ & $.657$ & $.584$ & $.628$ & $.868$ \\
    0.8 & $.448$ & $.704$ & $.408$ & $.775$ & $.516$ & $.649$ & $.612$ & $.542$ & $.861$ \\
    0.9 & $.463$ & $.698$ & $.483$ & $.703$ & $.467$ & $.663$ & $.560$ & $.650$ & $.841$ \\
    1.0 & $\mathbf{.607}$ & $.593$ & $\mathbf{.592}$ & $.590$ & $\mathbf{.569}$ & $.624$ & $.628$ & $.602$ & $.611$ \\
    \bottomrule
    \end{tabular}
\end{table}

\begin{table}[H]
    \centering
    \caption{Angular Preservation Error (APE) across all generative processes and encoder types as a function of constraint ratio $\rho$. Results reported as mean across 5 seeds. Lower is better.}
    \label{tab:all-rho-ape}
    \scriptsize
    \setlength{\tabcolsep}{3pt}
    \begin{tabular}{@{}c|cc|cc|c|cc|cc@{}}
    \toprule
    & \multicolumn{2}{c|}{\textbf{Identity}} & \multicolumn{2}{c|}{\textbf{Linear}} & \textbf{InvMLP} & \multicolumn{2}{c|}{\textbf{Patches}} & \multicolumn{2}{c}{\textbf{Spiral}} \\
    $\rho$ & MLP & Inv & MLP & Inv & MLP & MLP & Inv & MLP & Inv \\
    \midrule
    0.0 & $.325$ & $.047$ & $.310$ & $.046$ & $.265$ & $.280$ & $.112$ & $.257$ & $.027$ \\
    0.1 & $.288$ & $.035$ & $.269$ & $.036$ & $.265$ & $.187$ & $.101$ & $.134$ & $.114$ \\
    0.2 & $.267$ & $.020$ & $.268$ & $.020$ & $.257$ & $.146$ & $.105$ & $.195$ & $.038$ \\
    0.3 & $.286$ & $\mathbf{.004}$ & $.293$ & $\mathbf{.005}$ & $.210$ & $\mathbf{.126}$ & $.107$ & $.136$ & $\mathbf{.011}$ \\
    0.4 & $.223$ & $.014$ & $.235$ & $.014$ & $.176$ & $.148$ & $.111$ & $.126$ & $.067$ \\
    0.5 & $.202$ & $.034$ & $.223$ & $.034$ & $.204$ & $.149$ & $.122$ & $\mathbf{.120}$ & $.027$ \\
    0.6 & $.212$ & $.054$ & $.228$ & $.055$ & $.197$ & $.151$ & $.164$ & $.123$ & $.089$ \\
    0.7 & $.225$ & $.079$ & $.255$ & $.079$ & $.206$ & $.145$ & $.158$ & $.161$ & $.088$ \\
    0.8 & $.238$ & $.105$ & $.249$ & $.106$ & $.216$ & $.157$ & $.152$ & $.171$ & $.101$ \\
    0.9 & $.225$ & $.138$ & $.221$ & $.138$ & $.212$ & $.152$ & $.169$ & $.165$ & $.114$ \\
    1.0 & $\mathbf{.152}$ & $.155$ & $\mathbf{.155}$ & $.158$ & $\mathbf{.168}$ & $.154$ & $.160$ & $.156$ & $.190$ \\
    \bottomrule
    \end{tabular}
\end{table}

\begin{table}[H]
    \centering
    \caption{Final InfoNCE loss across all generative processes and encoder types as a function of constraint ratio $\rho$. Results reported as mean across 5 seeds.}
    \label{tab:all-rho-loss}
    \scriptsize
    \setlength{\tabcolsep}{3pt}
    \begin{tabular}{@{}c|cc|cc|c|cc|cc@{}}
    \toprule
    & \multicolumn{2}{c|}{\textbf{Identity}} & \multicolumn{2}{c|}{\textbf{Linear}} & \textbf{InvMLP} & \multicolumn{2}{c|}{\textbf{Patches}} & \multicolumn{2}{c}{\textbf{Spiral}} \\
    $\rho$ & MLP & Inv & MLP & Inv & MLP & MLP & Inv & MLP & Inv \\
    \midrule
    0.0 & $5.71$ & $6.05$ & $5.71$ & $6.05$ & $5.03$ & $5.17$ & $6.19$ & $5.07$ & $6.07$ \\
    0.1 & $6.07$ & $6.18$ & $6.06$ & $6.18$ & $5.38$ & $5.50$ & $6.31$ & $5.45$ & $6.44$ \\
    0.2 & $6.21$ & $6.30$ & $6.22$ & $6.30$ & $5.53$ & $5.62$ & $6.40$ & $5.55$ & $6.37$ \\
    0.3 & $6.32$ & $6.38$ & $6.32$ & $6.38$ & $5.63$ & $5.69$ & $6.47$ & $5.65$ & $6.38$ \\
    0.4 & $6.40$ & $6.46$ & $6.40$ & $6.46$ & $5.71$ & $5.76$ & $6.53$ & $5.72$ & $6.53$ \\
    0.5 & $6.46$ & $6.53$ & $6.47$ & $6.53$ & $5.78$ & $5.83$ & $6.60$ & $5.79$ & $6.52$ \\
    0.6 & $6.52$ & $6.59$ & $6.53$ & $6.59$ & $5.84$ & $5.88$ & $6.68$ & $5.85$ & $6.62$ \\
    0.7 & $6.57$ & $6.62$ & $6.57$ & $6.63$ & $5.89$ & $5.93$ & $6.71$ & $5.89$ & $6.72$ \\
    0.8 & $6.62$ & $6.67$ & $6.62$ & $6.67$ & $5.94$ & $5.97$ & $6.74$ & $5.94$ & $6.75$ \\
    0.9 & $6.65$ & $6.68$ & $6.66$ & $6.68$ & $5.98$ & $6.01$ & $6.78$ & $5.97$ & $6.81$ \\
    1.0 & $6.69$ & $6.69$ & $6.70$ & $6.69$ & $6.01$ & $6.03$ & $6.79$ & $6.01$ & $6.94$ \\
    \bottomrule
    \end{tabular}
\end{table}

\subsection{Synthetic Validation Figures}

Figure~\ref{fig:synthetic-experiments} provides visual summaries comparing MLP and inverse encoder performance across conditions.

\begin{figure}[H]
    \centering
    \begin{subfigure}[b]{0.25\textwidth}
        \centering
        \includegraphics[width=\textwidth]{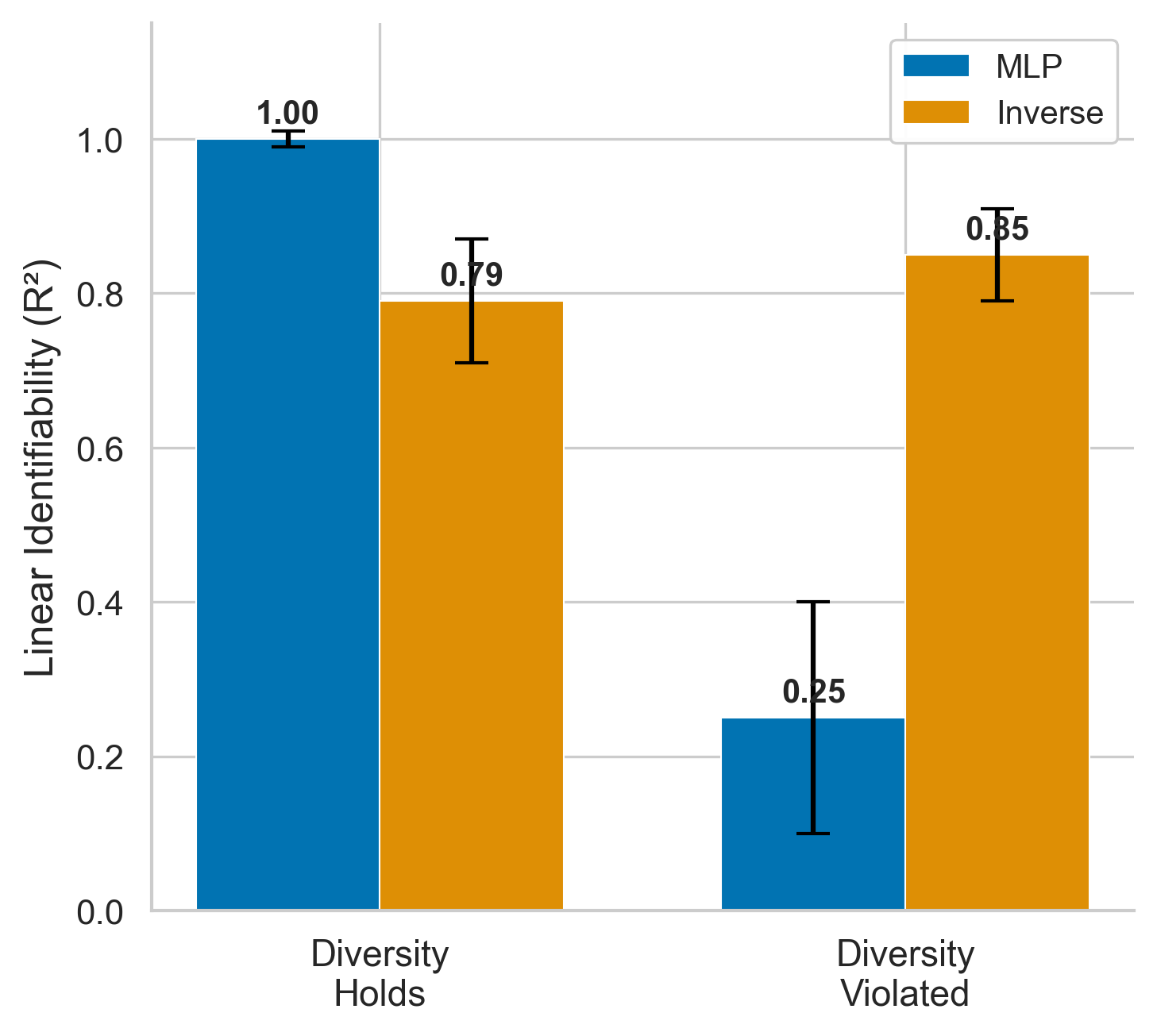}
        \caption{}
        \label{fig:synthetic-inductive-bias}
    \end{subfigure}
    \hspace{0.5em}
    \begin{subfigure}[b]{0.25\textwidth}
        \centering
        \includegraphics[width=\textwidth]{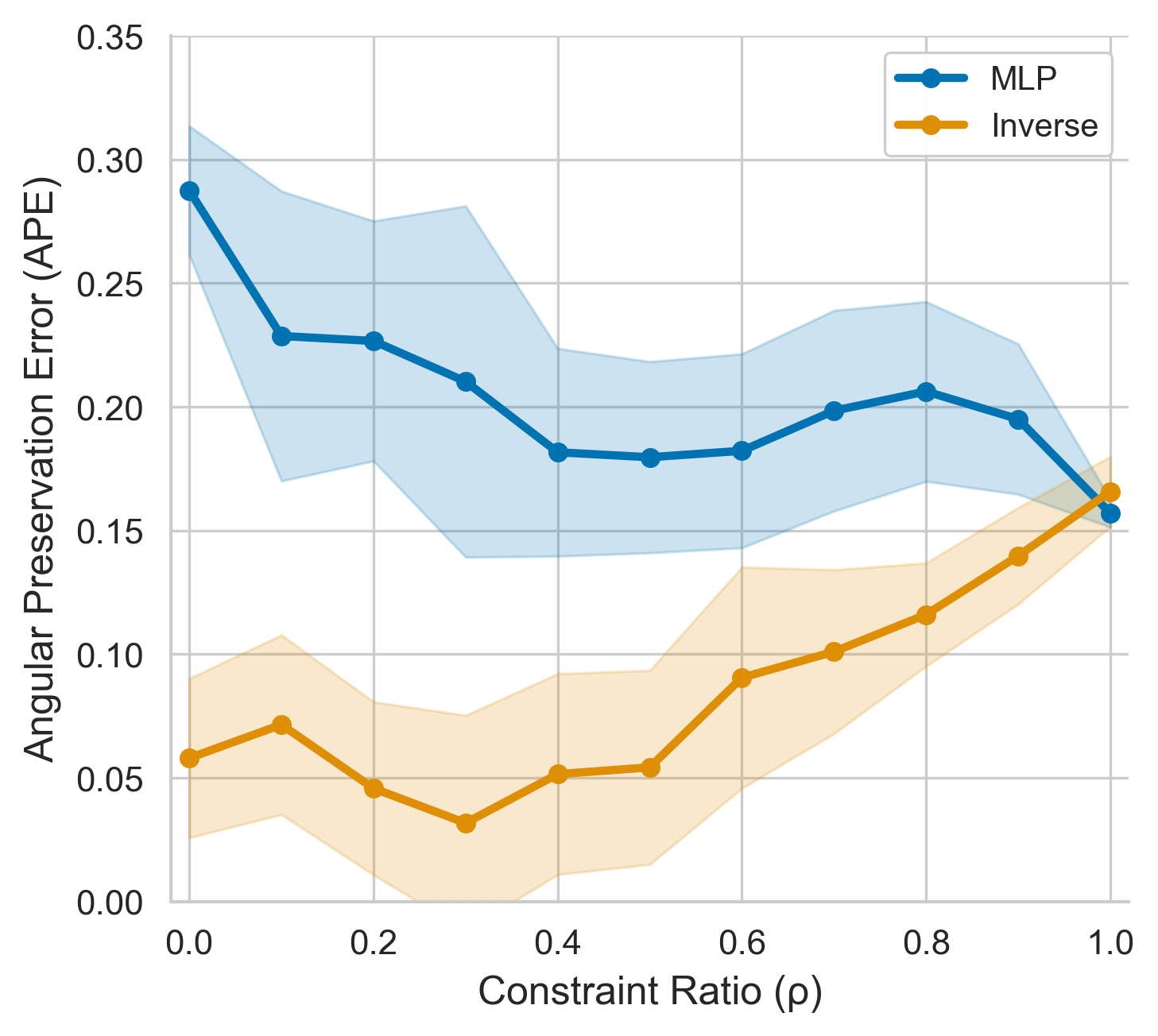}
        \caption{}
        \label{fig:synthetic-corrected-loss}
    \end{subfigure}
    \hspace{0.5em}
    \begin{subfigure}[b]{0.25\textwidth}
        \centering
        \includegraphics[width=\textwidth]{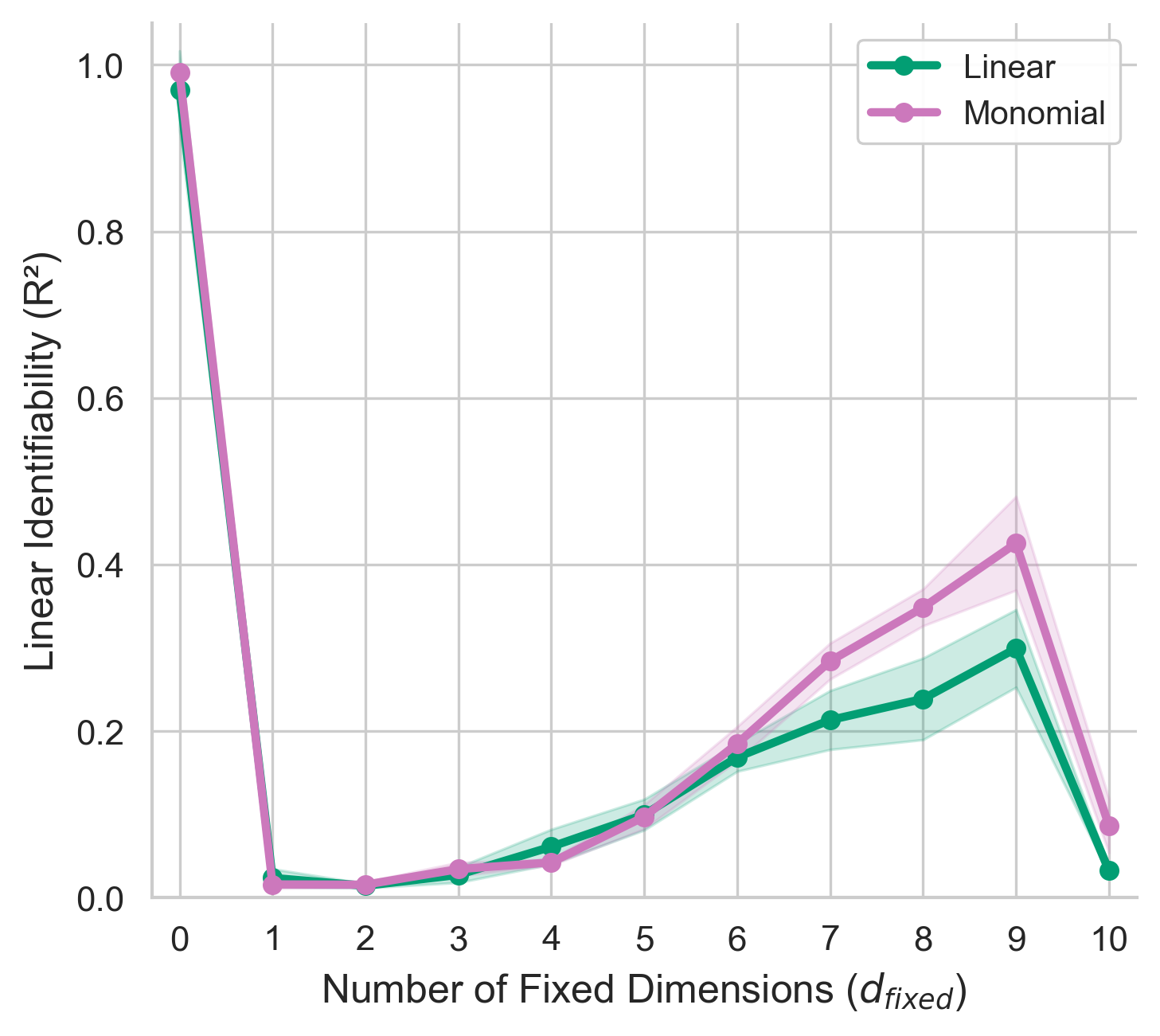}
        \caption{}
        \label{fig:synthetic-progressive}
    \end{subfigure}
    \caption{Synthetic validation of theoretical predictions.
    (a)~Linear identifiability for MLP and inverse encoders. When the diversity condition holds, MLP achieves $R^2 = 1.00$; when violated, it collapses to $R^2 = 0.25$ while inverse encoders remain robust ($R^2 = 0.83$).
    (b)~Angular preservation error vs.\ constraint ratio $\rho$. At $\rho = 1$ (corrected InfoNCE), MLP converges to inverse encoder performance.
    (c)~Linear identifiability on $\mathbb{S}^9$ vs.\ incremental violation of diversity condition; even $d_{\text{fixed}} = 1$ causes catastrophic failure.
    Results averaged across 5 generative processes; shaded regions indicate $\pm 1$ std.}
    \label{fig:synthetic-experiments}
\end{figure}

\end{document}